\documentclass{article}

    \PassOptionsToPackage{numbers, compress}{natbib}

\usepackage[preprint]{neurips_2024}

\usepackage[utf8]{inputenc} %
\usepackage[T1]{fontenc}    %
\usepackage{hyperref}       %
\usepackage{url}            %
\usepackage{booktabs}       %
\usepackage{amsfonts}       %
\usepackage{nicefrac}       %
\usepackage{microtype}      %
\usepackage{xcolor}         %
\usepackage{subfigure}

\usepackage{amsmath}
\usepackage{amssymb}
\usepackage{mathtools}
\usepackage{amsthm}

\usepackage{hyperref}
\usepackage{url}
\usepackage{tabularx}

\usepackage[capitalize,noabbrev]{cleveref}

\theoremstyle{plain}
\newtheorem{theorem}{Theorem}[section]

\newtheorem{lemma}[theorem]{Lemma}

\theoremstyle{definition}
\newtheorem{definition}[theorem]{Definition}

\theoremstyle{remark}
\newtheorem{remark}[theorem]{Remark}

\usepackage{algorithm}
\usepackage{algorithmic}

\title{\textit{What's the Magic Word?} A Control Theory of LLM Prompting}

\author{%
  Aman Bhargava\thanks{Use footnote for providing further information
    about author (webpage, alternative address)---\emph{not} for acknowledging
    funding agencies.} \\
  California Institute of Technology\\
  Pasadena, CA, USA \\
  \texttt{abhargav[at]caltech[dot]edu} \\
  \And
  Cameron Witkowski \\
  University of Toronto \\
  Toronto, ON, Canada \\
  \texttt{cameron.witkowski[at]mail.utoronto.ca} \\
  \And
  Shi-Zhuo Looi \\
  California Institute of Technology \\
  Pasadena, CA, USA \\
  \texttt{looi[at]caltech[dot]edu} \\
  \And
  Matt Thomson \\
  California Institute of Technology \\
  Pasadena, CA, USA \\
  \texttt{mthomson[at]caltech[dot]edu} \\
}

\begin{document}

\maketitle

\begin{abstract}
Prompt engineering is crucial for deploying LLMs but is poorly understood mathematically. 
We formalize LLM systems as a class of discrete stochastic dynamical systems to explore prompt engineering through the lens of control theory. 
We offer a mathematical analysis of the limitations on the controllability of self-attention as a function of the singular values of the parameter matrices.
We present complementary empirical results on the controllability of a panel of LLMs, including Falcon-7b, Llama-7b, and Falcon-40b. 
Given initial state $\mathbf x_0$ from Wikitext and prompts of length $k \leq 10$ tokens, we find that the ``correct'' next token is reachable at least 97\% of the time, and that the top 75 most likely next tokens are reachable at least 85\% of the time. 
Intriguingly, short prompt sequences can dramatically alter the likelihood of specific outputs, even making the least likely tokens become the most likely ones. 
This control-theoretic analysis of LLMs demonstrates the significant and poorly understood role of input sequences in steering output probabilities, offering a foundational perspective for enhancing language model system capabilities.
\end{abstract}

\section{Introduction}

LLMs pre-trained on unsupervised next token prediction objectives exhibit unprecedented dynamic reprogrammability achieved through ``prompting'', often referred to as zero-shot learning \citep{gpt3, wei2022emergent, hagendorff2023machine, noever2023numeracy, openai2023gpt4, GPT3_5_blog}. 
These capabilities appear to emerge as the model's size, training data, and training time are scaled.
The dynamic reprogrammability of LLMs is akin to the adaptable computational capacities observed in biological systems. 
This feature finds applications across domains such as machine translation \citep{wang2023document}, code generation \citep{code_llama}, and chatbots \cite{rlhf_chatbot}. 
A rigorous understanding of the prompt's influence over LLM generation would be of great utility for understanding LLMs and building more robust and capable systems leveraging LLMs.

Strategies for controlling pre-trained LLM generation today fall into three broad categories \cite{survey_controllable_text_gen}: 
\begin{enumerate}
    \item \textbf{Input Optimization (Prompting): } Adjusting the input tokens (e.g., rewording the prompt) to improve subsequent text generation. 
    \item \textbf{Model Optimization: } Adjusting the weights of the network (e.g., fine-tuning, RLHF) to improve model behavior during inference. 
    \item \textbf{Post-processing: } Adjusting or re-ranking generated text (e.g., surrogate ranking algorithm). 
\end{enumerate}

Of all these approaches, input optimization (i.e., prompting) is the least invasive and lowest-cost method -- and the least understood.
Prompt optimization is also deeply connected to the zero-shot capabilities of LLMs -- the mysterious emergent capabilities of LLMs such as problem-solving, knowledge retrieval, reasoning, and apparent general intelligence \citep{bubeck2023sparks}. With such a view, we seek to characterize the controllability of LLMs via prompting (Figure~\ref{fig:big_picture}).

\begin{figure}[htbp]
    \centering
    \includegraphics[width=\columnwidth]{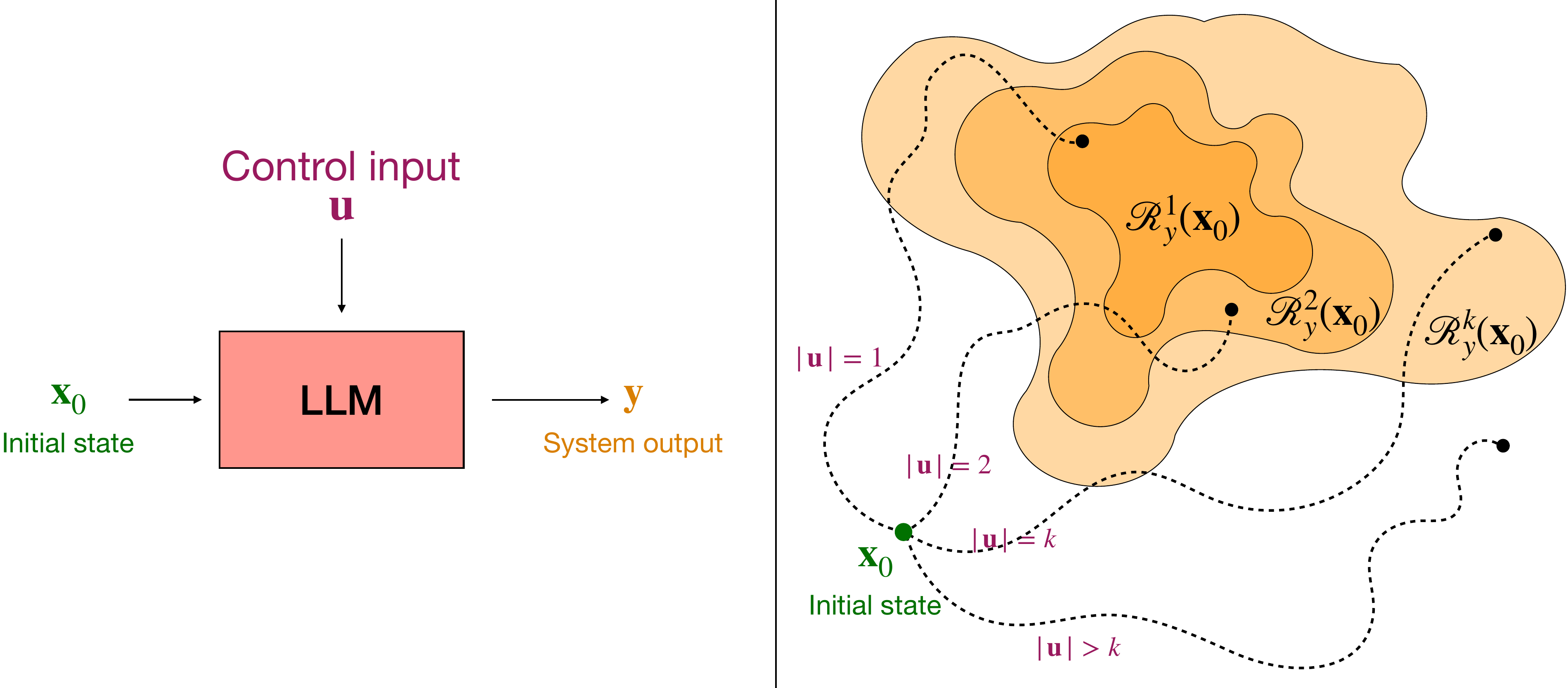}
    \caption{Illustration of the control-theoretic approach to LLM prompt engineering. 
    \textbf{Left:} the LLM system diagram mapping an initial state $\mathbf{x}_0$ to a system output $\mathbf{y}$ under the influence of a control input $\mathbf{u}$ (all token sequences). \textbf{Right}: sketch of the reachable output sets $R_y^k(\mathbf{x}_0)$ for varying control input lengths $k$. 
    }

    \label{fig:big_picture}
\end{figure}

\subsection{Contribution}

We formalize LLM systems in the mathematical framework of control theory in Section~\ref{sec:ctrl_theory_llms}.
Our analysis focuses on the reachable set of outputs $\mathcal R_y (\mathbf x_0)$ for an LLM system.
The reachable set is a fundamental concept in control theory that underlies notions of controllability, stability, and observability (cf. Appendix~\ref{sec:definitions_ctrl}). 
The reachable output set $R_y(\mathbf x_0)$ is the set of output sequences $\mathbf y$ for which there exists a control input sequence $\mathbf u^*$ that steers the LLM from initial state $\mathbf x_0$ to output $\mathbf y$ (cf. Definitions~\ref{def:llm-reachable-output-set},~\ref{def:reachable-output-set}). 

Our mathematical results in Section~\ref{sec:theorem} prove an upper bound on the contents of the reachable output set for a self-attention head as a function of the singular values of its parameter matrices. 
Since self-attention is the only component in a transformer block where significant information is exchanged between token representations, this bound provides a foothold for analysis of LLM controllability from the perspective of mechanistic interpretability (e.g., \cite{bricken2023monosemanticity, chefer2021transformer, conmy2023automated}). 
Our result represents an analytically computable necessary condition for an output to be in the reachable set (Equation~\ref{eqn:attention-reachability-condition}).

Our empirical results apply state-of-the-art prompt optimization techniques (Section~\ref{sec:methods}) to demonstrate a lower bound on the contents of the reachable output set for a panel of LLMs, including Llama-7b \citep{llama_1}, Falcon-7b, and Falcon-40b \citep{falcon40b}.
Specifically, we sample initial states $\mathbf x_0$ from the Wikitext dataset \citep{wikitext} and probe the reachable output tokens $y$ under length-constrained control input sequences $\mathbf u: |\mathbf u| \leq k$. 
The length constraint $k$ is highly relevant for \textit{optimal control} of LLMs, as prompts with fewer tokens require fewer computation and memory resources. 
We find that the reachable output set contains the ``correct'' next Wikitext token following $\mathbf x_0$ over 97\% of the time with prompts of $k\leq 10$ tokens. We expand our analysis of the contents of $R_y(\mathbf x_0)$ by sampling target output tokens $y$ based on the LLMs initial estimate of output likelihood $P_{LM}(y | \mathbf x_0)$. 
We find that the top 75 most likely output tokens $y$ are reachable at least 85\% of the time with prompts of $k\leq 10$ tokens. 
Intriguingly, some tokens drawn from the set of \textit{least} likely outputs are controllable to the most likely output with $k\leq 4$ control input tokens. 
Our results suggest that prior likelihood-based metrics, such as cross-entropy loss, cannot guarantee exclusion from the reachable set, emphasizing the gap in our current understanding of LLM systems and control. Implications of our results and open questions in LLM control theory are further discussed in Section~\ref{sec:discussion}.

\section{Related Work}
Much of the work on prompt optimization is concerned with finding prompts that induce higher LLM performance on ``fill-in-the-blank'' or ``cloze'' tasks \citep{cloze_1959}. 
One can frame a range of tasks including knowledge retrieval \citep{LAMA_dataset}, reasoning \citep{babl_dataset_reasoning}, and sentiment analysis \citep{wang2023chatgpt} as fill-in-the-blank tasks: 
\begin{itemize}
    \item \textbf{Knowledge Retrieval: } \textit{``The Titanic sank in the year \textbf{[MASK]}.''} (Answer: ``1912'')
    \item \textbf{Reasoning: } \textit{``A is taller than B. B is taller than C. Is A taller than C? \textbf{Answer: [MASK]}''} (Answer: ``Yes'')
    \item \textbf{Sentiment Analysis: } \textit{``I am sad today. \textbf{The sentiment of the previous sentence was [MASK]}''} (Answer: ``Negative'')
\end{itemize}

Notably, there is some freedom in the bolded ``prompt text'' that surrounds the question to convert it into a ``fill-in-the-blank'' task. As it turns out, the prompt tokens have a large effect on LLM performance \citep{gpt3, survey_controllable_text_gen, jiang2020know}.

Modern prompt optimization algorithms generally consist of two iterated steps: a sampling step where new prompts are generated and a testing step where the utility of the new prompts is evaluated, and the best are selected for the next iteration. 
Algorithms primarily differ in the sampling procedure, where various heuristics may be used to pick high-value swaps \citep{wen2023hard, zhou2023large, reynolds2021prompt}.  
Overall, AutoPrompt and its derivative algorithms have been the most numerically successful prompt optimization methods, with the greedy coordinate gradient (GCG) algorithm having state-of-the-art performance \citep{zou2023universal}.

\paragraph{The AutoPrompt Family: } AutoPrompt \citep{shin2020autoprompt} pioneered the current wave of prompt optimization. 
Shin \textit{et al} propose a prompt optimization technique and demonstrate its effectiveness for engineering prompts to improve LLM performance on knowledge and sentiment analysis tasks. 
At its core, the AutoPrompt algorithm leverages gradient information at the token embedding layer to inform iterative token exchanges within the prompt. 
This method was extended in \cite{zou2023universal} as the greedy coordinate gradient (GCG) algorithm. Taking inspiration from adversarial examples \citep{goodfellow2015explaining}, Zou \textit{et al} applied this AutoPrompt variant to generate ``jailbreak'' prompts that cause aligned LLMs to generate objectionable content. 

\paragraph{Other Prompt Optimization Methods: } Other investigations on LLMs as prompt optimizers \citep{zhou2023large} and further analysis of manual prompt optimization \citep{reynolds2021prompt} are informative but do not exceed the AutoPrompt family's performance. Some other methods include GBDA \citep{guo2021gradientbased}, an approach based on the Gumbel-Softmax reparametrization, the PEZ algorithm \citep{wen2023hard}, which directly optimizes embeddings via gradient information, and FluentPrompt \citep{shi2022human}, which differs from AutoPrompt by incorporating Langevin dynamics. Another family of techniques relating closely to our work is RL-Based prompt optimization methods \citep{deng2022rlprompt, hao2023optimizing, zhang2022tempera, kim2023multiprompter}. Such methods seek to optimize a prompt generation policy to maximize some reward signal, using a host of off the shelf reinforcement learning algorithms. Despite the variety of alternatives, GCG retains state-of-the-art performance.

\paragraph{Control Theory for LLMs: } To our knowledge, the only other work to date on the controllability or reachability of LLM text generation is \cite{soatto2023taming}. 
Soatto et al analyze the controllability of LLMs in terms of ``meaningful sentences'', defined as the sigma-algebra generated by snippets of text written on the Internet. 
Their empirical analysis revolves around demonstrating that LLMs are capable of attributing meaning. 
The theoretical analysis of LLM controllability is limited to ``meaningful sentences'', eliminating the possibility of out-of-distribution inputs and outputs.
These restrictions render their results challenging to leverage toward a practical understanding of LLM controllability. 
As stated in Section 5.5 of \cite{soatto2023taming}, ``If fed gibberish, the well-trained bot operates out of distribution, which does not allow predicting the reachable set''. 
We situate our work as a practically oriented exploration of LLM controllability. 
Motivated by challenges in developing LLM systems, we do not eliminate ``meaningless sentences'' from the state space or input space. 
We aim to establish a rigorous, general framework for understanding LLM systems and controllability that is amenable to the development of theory and practical engineering insights on systems design.

\section{Control Theory for LLMs}
\label{sec:ctrl_theory_llms}
Control theory originates from the study of automatic control systems in engineering. It seeks to understand how a ``plant'' system can be influenced toward a desired state using a ``control signal'' -- often in the presence of disturbances and uncertainty. 

Control theory is central to a variety of engineering problems, from electrical engineering to autopilot to telecommunications to manufacturing. Surprisingly, control theory has also been highly applicable to a diverse range of scientific disciplines. Analyzing systems through the lens of controllability has proven fruitful for generating insight into biological systems such as cell signaling pathways and neural networks \citep{yi2000robust}, the economics of central banking \citep{anicta2011introduction}, and controlling the spread of infectious diseases \citep{virus_control}. One of the central benefits of studying systems via controllability is that a range of questions and problems naturally emerge from the framing: \textit{when is control possible? What is the cost of control? How computationally intensive is control?} These questions are both practically useful and often lead to fundamental insights about the nature of the system in question.

To develop a control theory of LLMs, we begin with fundamental definitions of systems and control in Appendix~\ref{sec:definitions_ctrl}. 
We extend these fundamentals to define LLM systems (Definition~\ref{def:llm-system}) and outline specific canonical control concepts and problems such as controllability and reachability (Definition~\ref{def:llm-reachable-output-set},~\ref{def:llm-output-controllability}) that arise naturally for LLM systems. 

\paragraph{Language Model Notation: } We denote a causal language model using $P_{LM}$. $P_{LM}$ maps from an ordered list of tokens from a vocabulary set $\mathcal V$ (e.g., $\mathbf x \in \mathcal V^n$) to the probability distribution over the next token $P_{LM}(x_{n+1} |\mathbf x) \in [0,1]^{|\mathcal V|}$. We use $\mathcal V^*$ to denote the set of all possible sequences of any length composed of tokens from $\mathcal V$. The addition operator indicates the concatenation of token sequences. Bolded lowercase variables (e.g., $\mathbf x = [x^1, \dots, x^n]$) denote token sequences while unbolded lowercase variables refer to individual tokens (e.g., $x\in \mathcal V$). The length of a token sequence is denoted $|\mathbf x|$.

While LLMs are at times leveraged in a manner that masks the iterative aspects of generation, the reality is that token generation and externally imposed ``control input'' sequences are generated and processed sequentially, leading to non-trivial system dynamics. 
Several key differences remain between LLM-based systems and systems typically modeled through ordinary differential equations (ODEs), which have long been a cornerstone in the study of continuous-time dynamical systems:
\begin{enumerate}
    \item \textbf{Discrete state and time: } LLM systems operate on sequences of discrete tokens over a discrete time set, in contrast to the continuous state spaces and time sets studied in classical control theory.
    \item \textbf{Shift-and-Grow State Dynamics: } Whereas the system state in an ODE-based system has a fixed size over time, the system state $\mathbf x(t)$ for LLM systems grows as tokens are added to the state sequence. 
    \item \textbf{Mutual exclusion on control input token vs. generated token: } The LLM system state $\mathbf x(t)$ is written to one token at a time. 
    The newest token is either drawn from the control input $u(t)$ or is generated by the LLM by sampling $x'\sim P_{LM}(x' | \mathbf x(t))$. 
    This differs from traditional discrete stochastic systems, where the control sequence and internal dynamics generally affect the state synchronously. 
\end{enumerate}

We begin by rigorously defining LLM systems with user input, drawing from the abstract mathematical definition of a system (Definition~\ref{def:system}).

\begin{definition}[LLM System with Control Input]
\label{def:llm-system} 
An autoregressive LLM system with control input $\Sigma = (\mathcal V, P_{LM})$ consists of:
\begin{itemize}
    \item $\mathcal T = \mathbb N$ -- The \textbf{time set} is the natural numbers. 
    \item $\mathcal X = \mathcal V^*$ -- The \textbf{state space} consists of all possible token sequences of any length drawn from $\mathcal V$. We denote the state at time $t$ as $\mathbf x(t) = [x^0(t), \dots, x^t(t)]$.
    \item $\mathcal U = \mathcal V \cup \varnothing$ -- The \textbf{input} takes values from the vocabulary set $\mathcal V$ or null. 
    \item $\phi: \mathcal X \times \mathcal U \times \mathcal T^2 \to \mathcal X$ -- The \textbf{transition map} is 
    \begin{align}
        \label{eqn:llm-transition-map}
        &\phi(\mathbf x(t), u(t), t, t+1) = \begin{cases}
            \mathbf x(t) + u(t) & \text{ if } u(t) \neq \varnothing \\ 
            \mathbf x(t) + x'  & \text{ else } 
        \end{cases}
    \end{align}
    \textit{where $x' \sim P_{LM}(x' | \mathbf x(t))$}.
    Note that the general multi-step transition map $\phi(\mathbf x(t), u, t, t+N)$ can be achieved by iterating equation~\ref{eqn:llm-transition-map} for control sequences $\mathbf u$ defined over the interval $[t, t+N]$. 

    \item $h(\mathbf x(t); r) = [x^{t-r}(t), \dots, x^t(t)]$ -- The \textbf{readout map} returns the most recent $r$ tokens from state $\mathbf x(t)$. 
\end{itemize}
\end{definition}

We note that this LLM system definition is generalizable to a variety of LLM augmentations, including chain-of-thought \citep{wei2023chainofthought}, retrieval-augmented generation \citep{lewis2020retrieval}, and chatbot interaction. For example, chain-of-thought is equivalent to sampling the readout map $h(x(t), r)$ at time $T > |\mathbf u| + |\mathbf x_0| + r$ for prompt $\mathbf u$ and initial state $\mathbf x_0$. A similar formulation may be applied to LLM systems endowed with programmatic tools (e.g., \cite{patil2023gorilla}). 

In Definition~\ref{def:llm-system}, we assume that the control input gets to ``decide'' whether to yield token generation to the LLM ($u(t) = \varnothing$) or override the LLM and add some token $u(t) \neq \varnothing$ to the state $\mathbf x(t)$. 
This assumption generally holds when building LLM systems, though it may not hold when using existing systems (e.g., via non-streaming API). 
When discussing finite-length control inputs -- e.g., the family of $k$-long input sequences $\mathbf u \in \mathcal V^k$ -- the value of $u(\ell) : \ell > k$ is implicitly $\varnothing$ unless otherwise stated.  

While next token generation $x' \sim P_{LM} (x' | \mathbf x(t))$ in equation~\ref{eqn:llm-transition-map} is probabilistic, we may render the system deterministic by sampling with zero temperature (i.e., greedy decoding). The greedy decoding assumption provides a foothold to analyze the reachable sets and controllability of LLM systems without invoking notions of stochastic control as in \cite{sivaramakrishnan2023stochastic, soatto2023taming}. 
Moreover, it remains connected to temperature-based stochastic decoding strategies as a limiting case of temperature-based sampling as zero-temperature sampling.

We now extend Definition~\ref{def:output-reachability} to define output controllability for LLM systems: 

\begin{definition}[LLM Output Reachability]
\label{def:llm-output-reachability}
Output token sequence $\mathbf y \in \mathcal V^r$ is reachable from initial state $\mathbf x_0\in \mathcal V^*$ for LLM system $\Sigma(\mathcal V, P_{LM})$ iff there exists some time $T$ and input $\mathbf u^* \in \mathcal U^k$ for some $k+|\mathbf x_0| \leq T$ that steers the LLM from initial state $\mathbf x_0$ to output $\mathbf y = h(\mathbf x(T), r)$ at time $T$.
\end{definition}

We disregard the trivial solution wherein the control input $\mathbf u^*(t)$ overrides the LLM to force the state sequence to take on the desired output value $\mathbf y$.
We focus on the case of immediate generation, where $T=k+|\mathbf x_0| + r$.

The reachable output set definition for LLM systems follows from Definition~\ref{def:reachable-output-set}: 

\begin{definition}[LLM Reachable Output Set]
\label{def:llm-reachable-output-set}
The reachable output set from initial state $\mathbf x_0\in \mathcal V^*$ for LLM system $\Sigma = (\mathcal V, P_{LM})$ is denoted $R_y(\mathbf x_0)$ and consists of all reachable outputs $\mathbf y \in \mathcal V^*$ from initial state $\mathbf x_0$. 
\end{definition}

Output controllability for LLMs follows from Definition~\ref{def:output-controllability}:

\begin{definition}[LLM Output Controllability]
\label{def:llm-output-controllability}
An LLM system $\Sigma = (\mathcal V, P_{LM})$ is output controllable iff, for every initial state $\mathbf x_0 \in \mathcal V^*$, the reachable output set $\mathcal R_y(\mathbf x_0) = \mathcal V^*$.
\end{definition}

The turn-based nature of writing to the LLM state sequence $\mathbf x(t)$ invites the question of whether the prompt $\mathbf u$ should preempt the imposed state $\mathbf x_0$ or come after the state
\footnote{Both situations are reasonable in developing LLM systems: $\mathbf u$ preceding $\mathbf x_0$ may arise when prompting an LLM to complete a partial string $\mathbf x_0$. $\mathbf u$ proceeding $\mathbf x_0$ may arise when prompting an LLM in the presence of an imposed system prompt $\mathbf x_0$. Therefore, how an initial state $\mathbf x_0$ is interleaved with control input $\mathbf u$ is largely a design decision.}. 
We focus our efforts on cases where $\mathbf u$ comes before imposed state sequence $\mathbf x_0$ due to its importance for developing system prompts and controlling text completion-based generation where the desired output is $\mathbf x_0 + \mathbf y^*$ for some desired continuation $\mathbf y^*$ of partial string $\mathbf x_0$.
Due to the costly nature of long prompts, we are especially interested in the existence of prompts $\mathbf u^*$ with minimal length $|\mathbf u^*|$. 

Definitions~\ref{def:llm-reachable-output-set} and ~\ref{def:llm-output-controllability} form the basis for our control theory of LLMs. 
While amenable to theoretical analysis as in Section~\ref{sec:theorem} and \cite{soatto2023taming}, empirical analysis of the reachable set and controllability is challenging due to the intractable size of $\mathcal V^*$.
We propose the following statistical measure of controllability for practically assessing the controllability of an LLM system w.r.t. a dataset $\mathcal D$ under prompt length constraint $|\mathbf u| \leq k$: 

\begin{definition}[$k$-$\epsilon$ Controllability]
\label{def:k-eps}
Consider a dataset of state-output pairs $\mathcal D = \{(\mathbf x_0^i, \mathbf y^i)\}_{i\in [N]}$. An LLM $\Sigma = (\mathcal V, P_{LM})$ is $k$-$\epsilon$ controllable w.r.t. $\mathcal D$ if
\begin{equation}
    \label{eqn:k-epsilon}
    \Pr\{\mathbf y \notin \mathcal R^k_y(\mathbf x_0)\} \leq \epsilon
\end{equation}
For $(\mathbf x_0, \mathbf y) \sim \mathcal D$, where $\mathcal R^k_y(\mathbf x_0^i)$ is the reachable set of outputs as in Definition~\ref{def:llm-reachable-output-set} under the constraint that prompts $\mathbf u$ must have length $|\mathbf u|\leq k$. 
\end{definition}

Our empirical work in Section~\ref{sec:results} explores $k$-$\epsilon$ controllability w.r.t. initial states $\mathbf x_0$ sampled from the Wikitext dataset. 
While empirical analysis of LLM controllability is challenging due to the lack of apparent structure in LLM dynamics and the combinatorially large state space, we may still experimentally establish the \textit{existence} of optimal prompts $\mathbf u^*$ that elicit a given output, and thus establish a lower bound on the content of the reachable set. 
Meanwhile, our theoretical work in Section~\ref{sec:theorem} establishes \text{upper bounds} on the content of the reachable set for self-attention. 
We hope these complementary approaches aid in unifying our understanding of LLM systems.

\section{The Self-Attention Control Theorem}
\label{sec:theorem}

Self-attention is a central component in modern transformer-based language models \citep{gpt3, llama_1, radford2019language, min2023recent}. Introduced in \cite{bahdanau2016neural} and popularized by \cite{vaswani2017attention}, self-attention is the primary component in transformers where token representations exchange information. 
Self-attention mechanisms have significantly advanced the field of natural language processing, enabling models to capture long-range dependencies and achieve impressive performance on various tasks.
Despite the widespread adoption and success of self-attention, the extent to which the outputs of self-attention layers can be precisely controlled via the input sequence remains an open question.

In this section, we present the Self-Attention Control Theorem, which proves bounds for understanding the reachability of self-attention outputs given limited control over the input token representations.

\subsection{Preliminaries}

\begin{definition}[Self-Attention]
\label{def:self-attention}

Self-attention $\Xi$ is parameterized by weight matrices $\boldsymbol{\theta} = (\mathbf W_q, \mathbf W_{\rm key}, \mathbf W_v)$. $\Xi$ is a mapping from $\mathbb{R}^{N \times d_{in}}$ to $\mathbb{R}^{N \times d_{out}}$, where $N$ is the number of input token representations, each of dimensionality $d_{in}$, and $d_{out}$ is the dimensionality of the output token representations.

\begin{equation}
\label{eqn:self-attention}
\Xi(\mathbf{X}; \boldsymbol{\theta}) = \mathbf{D}^{-1} \exp \left( \frac{\mathbf{QK^{\top}}}{\sqrt{d_{\rm key}}} \right) \mathbf{V}
\end{equation}

where $\exp()$ denotes element-wise exponentiation of the matrix entries, $\mathbf W_q, \mathbf W_{\rm key} \in \mathbb{R}^{d_{in} \times d_{\rm key}}$, $\mathbf W_v \in \mathbb{R}^{d_{in} \times d_{out}}$, $\mathbf Q =\mathbf X\mathbf W_q$, $\mathbf K = \mathbf  X \mathbf W_{\rm key}$, $\mathbf V = \mathbf 
 X\mathbf  W_v$, and $\mathbf{D}$ is a diagonal positive definite matrix defined as

\begin{equation}
\label{eqn:denominator-definition}
\mathbf{D} := \text{diag} \left( \exp \left( \frac{\mathbf{QK^{\top}}}{\sqrt{d_{\rm key}}} \right) \mathbf{1}_{N \times 1} \right)
\end{equation}

where $\mathbf{1}_{N \times 1}$ is an $N \times 1$ matrix of ones.
\end{definition}

The parameters and operation of $\Xi$ are independent of the number of token representations $N$. Self-attention is typically applied to discrete token sequences by embedding each token in the sequence as a vector in $\mathbb{R}^{d_{in}}$ to construct the matrix of $N$ token representations $\mathbf{X} \in \mathbb{R}^{N \times d_{in}}$.

We focus on the reachability of output token representations $\Xi(\mathbf{X}; \boldsymbol{\theta})$, where we partition the input $\mathbf{X} \in \mathbb{R}^{(k + m) \times d_{in}}$ into a $k \times d_{in}$ block of control input representations $\mathbf{U}$ and an $m \times d_{in}$ block of imposed state representations $\mathbf{X_0}$ (cf. Definition~\ref{def:llm-system}) where $k+m=N$. 
Thus the complete input matrix $\mathbf{X}$ is a concatenation of the control input $\mathbf{U}$ and the imposed state $\mathbf{X}_0$. 

\begin{align}
    \Xi (\mathbf X; \boldsymbol{\theta}) &= \Xi \begin{pmatrix}\begin{bmatrix}
        \mathbf U \\
        \mathbf X_0
    \end{bmatrix}; \boldsymbol{\theta}\end{pmatrix} = \Xi([\mathbf U; \mathbf X_0]; \boldsymbol{\theta}) 
    \\
    &= \begin{bmatrix}
        \mathbf U' \\
        \mathbf Y
    \end{bmatrix} = [\mathbf U'; \mathbf Y]
\end{align}

We also partition the output $\mathbf{X'} = \Xi (\mathbf{X}; \boldsymbol{\theta}) \in \mathbb{R}^{(k+m) \times d_{in}}$ into a corresponding $k \times d_{out}$ matrix $\mathbf{U'}$ and an $m \times d_{out}$ matrix $\mathbf{Y}$. 

We aim to characterize the reachable set of output representations $\mathbf{Y} \in \mathcal{R}_y^k(\mathbf{X_0})$ under $m$ imposed input representations $\mathbf{X_0}$ and $k$ controllable input representations $\mathbf{U}$. 
Although the reachable set is now a set of continuous-valued output representation matrices in $\mathbb{R}^{m \times d_{in}}$, we can readily adapt Definitions~\ref{def:llm-reachable-output-set}-\ref{def:llm-output-reachability} to define the reachable set for these conditions: 

\paragraph{Reachability for Self Attention: } Following from the original output reachability definition (Definition \ref{def:llm-output-reachability}), let ${\mathbf Y}^*\in \mathbb{R}^{m\times d_{out}}$ be the desired output. 
We consider $\mathbf Y^*$ reachable from initial state $\mathbf X_0$ if there exists some $\mathbf U$ that steers the output of $\Xi\big([\mathbf U; \mathbf X_0]; \boldsymbol{\theta}\big)$ to output $[\mathbf U'; \mathbf Y]$ such that $\mathbf Y = \mathbf Y^*$.

\subsection{The theorem and its motivation}

Our approach is to split the output $\mathbf Y$ into two parts, $\mathbf Y=\mathbf Y_u+\mathbf Y_x$, corresponding to the control input and imposed state, respectively. $\mathbf Y_x$ can be bounded as a function of $\mathbf X, k$, and $\boldsymbol{\theta}$. $\mathbf Y_u$ is the remaining component arising from $\mathbf U$. 
Each of the two parts $\mathbf Y_u$ and $\mathbf Y_x$ is split into two further components, one orthogonal to $\mathbf Y^*$ and one parallel to it. For instance, we denote the orthogonal part of $\mathbf Y_x$ by $\mathbf Y_{x, \perp}$. Thus we have 
\begin{align*}
    \mathbf Y&=\mathbf Y_{u}+\mathbf Y_{x}\\
    &=(\mathbf Y_{u,||}+\mathbf Y_{u,\perp}) + (\mathbf Y_{x,||}+\mathbf Y_{x,\perp})
\end{align*}
After rearranging, we have $\mathbf Y=
(\mathbf Y_{u,||}+\mathbf Y_{x,||}) + (\mathbf Y_{u,\perp}+\mathbf Y_{x,\perp})
    \in \operatorname{span}(\mathbf Y^*)\oplus \operatorname{span}(\mathbf Y^*)^\perp$.
If the desired output is reachable, then $\mathbf Y_{u,\perp}+\mathbf Y_{x,\perp}= \mathbf 0$ and also $\|\mathbf Y_{u,\perp}\|=\|\mathbf Y_{x,\perp}\|$ (see Appendix~\ref{app:more_general_theorem}). \\

\begin{figure}[ht]
    \centering
    \begin{minipage}{0.5\textwidth}
        \includegraphics[width=\linewidth]{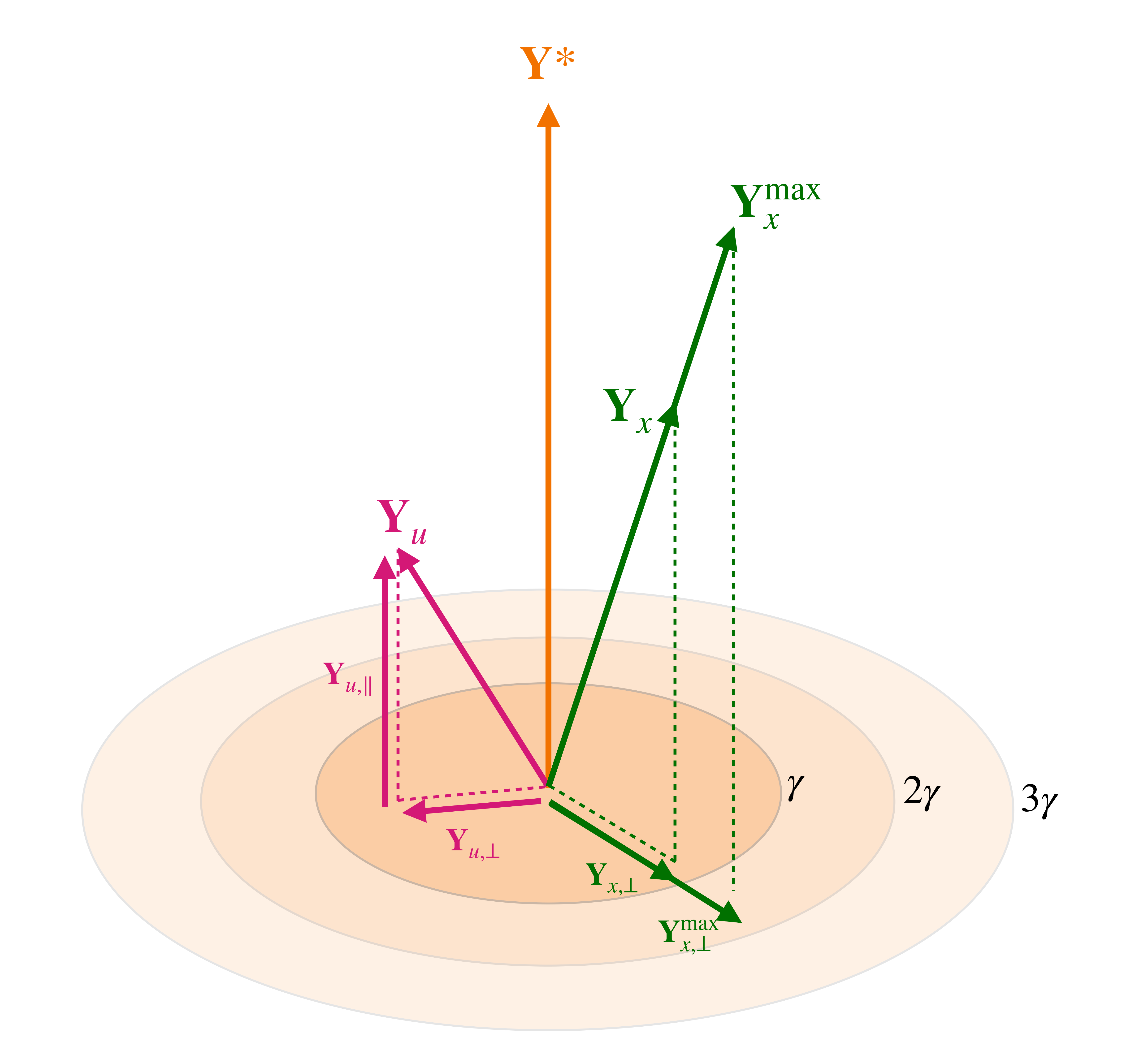}
    \end{minipage}
    \caption{
        \label{fig:attention_figure}
        Visualization of Theorem~\ref{thm:attention-control} $\mathbf{Y}^*$ and components of $\mathbf{Y}_u, \mathbf{Y}_x$. 
        If $\|\mathbf{Y}_{x,\perp}^{\max,i} \|$ exceeds $k\gamma$, then no prompt of length $\leq k$ can steer the self-attention to output $\mathbf{Y}^*$ given imposed $\mathbf{X}_0$ and constraints on $\|\mathbf{U}^i\| \leq M_u$.
    }
\end{figure}

\begin{theorem}[Self-Attention Control Theorem, proved in Appendix~\ref{app:proof-control-llms}]
\label{thm:attention-control}

Consider a self-attention layer with input $\mathbf{X} \in \mathbb{R}^{m \times d}$ and control input $\mathbf{U} \in \mathbb{R}^{k \times d}$, where $m$ is the number of imposed tokens, $k$ is the number of control tokens, and $d$ is the token embedding dimension. Let $\mathbf{Y}^* \in \mathbb{R}^{m \times d}$ be the desired output, and let $\mathbf{Y} \in \mathbb{R}^{m \times d}$ be the actual output of the self-attention layer.

Let $\mathbf Y_x^{\operatorname{max}} = \Xi(\mathbf X_0; \boldsymbol{\theta})$ be the output of the self-attention layer given only the imposed state $\mathbf X_0$. 
As before, we denote the $i$-th row of the orthogonal component of $\mathbf Y_x^{\operatorname{max}}$ to the %
desired $\mathbf Y^{*}$ as $\mathbf Y_{x,\perp}^{\operatorname{max},i}$. 

Then $\mathbf Y^*$ is unreachable for any control input $\mathbf U$ if, for any $i\in \{1,\dots,m\}$, 

\begin{equation}
\label{eqn:attention-reachability-condition}
\|\mathbf{Y}_{x,\perp}^{\operatorname{max},i}\| > k \gamma_i(\mathbf X_0, \boldsymbol{\theta})
\end{equation}
where 
\begin{equation}
\gamma_i(\mathbf X_0, \boldsymbol{\theta}) := \frac{e^\alpha}{g_i} \sigma_v M_u, \quad \alpha = \sigma_q\sigma_{\rm key} M_uM_x/\sqrt{d_\mathrm{key}} 
\end{equation}

\begin{equation}
    g_i(\mathbf X_0,\boldsymbol{\theta}) := \sum_{j=1}^m\exp\left((\mathbf X_0)^{i}\mathbf W_q\mathbf W_\mathrm{key}^\top (\mathbf X_0)^{j\top}/\sqrt{d_\mathrm{key}}\right),
\end{equation}
$\sigma_v,\sigma_q$ and $\sigma_{\rm key}$ being the maximum singular values of the value, query and key projection matrices, respectively, and with $M_u := \max_{j} \|\mathbf{U}^j\|$, $M_x := \max_{j} \|(\mathbf{X}_0)^j\|$ being the maximum norms of the control and imposed token embeddings, respectively.

\end{theorem}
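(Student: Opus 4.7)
The plan is to split the $i$-th row of the self-attention output into contributions from the control tokens and the imposed tokens, then track the components orthogonal to $\mathbf{Y}^{*,i}$. Writing $\mathbf{Y}^i = \mathbf{Y}_u^i + \mathbf{Y}_x^i$, where $\mathbf{Y}_u^i$ collects the summands coming from the $k$ control tokens and $\mathbf{Y}_x^i$ collects the summands coming from the $m$ imposed tokens (both sharing the full softmax denominator $d_i$), the critical structural observation is that the numerator of $\mathbf{Y}_x^i$ is identical to the numerator of $\Xi(\mathbf{X}_0;\boldsymbol{\theta})^i$: the query at position $k+i$ is $\mathbf{X}_0^i \mathbf{W}_q$, and the keys and values appearing in $\mathbf{Y}_x^i$ all come from $\mathbf{X}_0$. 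Hence $\mathbf{Y}_x^i = (g_i/d_i)\,\mathbf{Y}_x^{\max,i}$, and in particular $\|\mathbf{Y}_{x,\perp}^i\| = (g_i/d_i)\,\|\mathbf{Y}_{x,\perp}^{\max,i}\|$.

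For $\mathbf{Y} = \mathbf{Y}^*$ to hold, the perpendicular components must cancel on each row, forcing $\|\mathbf{Y}_{u,\perp}^i\| = \|\mathbf{Y}_{x,\perp}^i\|$ as noted in the preamble to the theorem. I then bound $\|\mathbf{Y}_{u,\perp}^i\| \leq \|\mathbf{Y}_u^i\|$ termwise. Each attention weight linking the query $\mathbf{X}_0^i \mathbf{W}_q$ to a control key $\mathbf{U}^j \mathbf{W}_\mathrm{key}$ satisfies the logit bound $|\mathbf{X}_0^i \mathbf{W}_q \mathbf{W}_\mathrm{key}^\top \mathbf{U}^{j\top}/\sqrt{d_\mathrm{key}}| \leq \sigma_q\sigma_\mathrm{key} M_u M_x / \sqrt{d_\mathrm{key}} = \alpha$, so each control-token softmax weight is at most $e^\alpha$, while each value vector $\mathbf{U}^j \mathbf{W}_v$ has norm at most $\sigma_v M_u$. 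Summing the $k$ control-token contributions and dividing by $d_i$ yields the key estimate $\|\mathbf{Y}_u^i\| \leq k\, e^\alpha \sigma_v M_u / d_i$.

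Combining the two ingredients gives $(g_i/d_i)\,\|\mathbf{Y}_{x,\perp}^{\max,i}\| = \|\mathbf{Y}_{u,\perp}^i\| \leq k\, e^\alpha \sigma_v M_u / d_i$. The factor of $d_i$ cancels, leaving $\|\mathbf{Y}_{x,\perp}^{\max,i}\| \leq k\, e^\alpha \sigma_v M_u / g_i = k\gamma_i$ as a necessary condition for reachability. Contrapositively, whenever $\|\mathbf{Y}_{x,\perp}^{\max,i}\| > k\gamma_i$ for some $i$, no control input $\mathbf{U}$ can achieve $\mathbf{Y} = \mathbf{Y}^*$, which is exactly the theorem.

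The main obstacle will be bookkeeping of the normalizations: the identity $\mathbf{Y}_x^i = (g_i/d_i)\,\mathbf{Y}_x^{\max,i}$ must be invoked precisely so that the factor of $d_i$ appearing in the control-token upper bound cancels against the factor of $d_i$ in $\|\mathbf{Y}_{x,\perp}^i\|$. Prematurely loosening $d_i$ to $g_i$ via $d_i \geq g_i$ inside the bound on $\|\mathbf{Y}_u^i\|$ wastes a factor of $d_i/g_i$ and yields a strictly weaker inequality than the one stated. The remaining ingredients — the spectral estimate on each logit, the triangle-inequality bound on $\|\mathbf{Y}_u^i\|$, and the $\operatorname{span}(\mathbf{Y}^{*,i})$/orthogonal-complement decomposition — are routine once this cancellation is lined up.
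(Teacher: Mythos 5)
Your proof is correct and follows essentially the same route as the paper's: the same block decomposition $\mathbf Y^i = \mathbf Y_u^i + \mathbf Y_x^i$, the same spectral bound $e^\alpha$ on each entry of the cross-attention block, the same orthogonal-cancellation argument, and the same observation that $\mathbf Y_x^i$ is a positive rescaling of $\mathbf Y_x^{\max,i}$. The only difference is presentational: you cancel the shared denominator $d_i$ directly, whereas the paper first bounds both sides by their worst cases (introducing $\beta_i$ and $\mathbf Y_x^{\min,i}$ with denominator $g_i + ke^\alpha$) and then shows in its Appendix~\ref{app:beta_to_gamma} that the common factor cancels to give the same condition.
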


\begin{remark}
\label{rem:convergence}
The upper bound $k\gamma_i(\mathbf X_0, \boldsymbol{\theta})$ scales linearly with $k$, implying that the set of unreachable $\mathbf Y^*$ becomes smaller as $k$ grows larger.
Moreover, $\gamma$ is solely a function of the imposed state $\mathbf X_0$.
\end{remark}

\paragraph{Proof Summary: } An important idea of the proof is the decomposition of the output representations $\mathbf{Y}$ into two components: $\mathbf{Y}_x$ and $\mathbf{Y}_u$. The $\mathbf{Y}_x$ component arises from the value projections of the imposed state $\mathbf{X}_0$, while $\mathbf{Y}_u$ arises from the value projections of the control input $\mathbf{U}$. 
Although the softmax operation in the self-attention mechanism introduces cross-terms between $\mathbf{X}$ and $\mathbf{U}$ in both $\mathbf{Y}_x$ and $\mathbf{Y}_u$, we can disentangle their influences by considering the auxiliary representations $\mathbf{Y}_x^{\operatorname{max}}$ and $\mathbf{Y}_u^{\operatorname{max}}$. 
Specifically, $\mathbf{Y}_x^{\operatorname{max}}=\Xi(\mathbf X_0; \boldsymbol{\theta})$ represents the output of the self-attention mechanism $\Xi$ when only the imposed state $\mathbf{X}_0$ is provided as input, without any control input $\mathbf{U}$.  
We derive the bound in Theorem~\ref{thm:attention-control} by first deriving the bound $\beta_i  \geq \|\mathbf Y_u^i\|$ on row $i$ of $\mathbf Y_u$. In Appendix~\ref{app:beta_to_gamma}, we observe that, if $\|\mathbf Y_{x,\perp}^i\| \geq \beta_i$, it is impossible for $\|\mathbf Y_{u,\perp}^i\|$ to nullify the orthogonal component of $\mathbf Y_x$, rendering $\mathbf Y^*$ unreachable. A simplification of this inequality yields our bound $\|\mathbf{Y}_{x,\perp}^{\operatorname{max},i}\| > k \gamma_i(\mathbf X_0, \boldsymbol{\theta})$.

\paragraph{Discussion of Theorem~\ref{thm:attention-control}: } The reachable set exclusion condition in Equation~\eqref{eqn:attention-reachability-condition} arises when the output representation $\mathbf{Y}_x^{\operatorname{max}}$, which depends only on the imposed state $\mathbf{X}_0$, is too far away from the desired output $\mathbf{Y}^*$ for the control input $\mathbf{U}$ to steer the output towards $\mathbf{Y}^*$. The ability of the control input $\mathbf{U}$ to nullify the impact of $\mathbf{Y}_x^{\operatorname{max}} = \Xi(\mathbf{X}_0; \boldsymbol{\theta})$ scales with the number of control tokens $k$ (see hyperbolic relationship in Equation \ref{eq:beta_defn}). A longer control input can "dominate" the influence of $\mathbf{X}_0$ by increasing the relative contribution of $\mathbf{Y}_u$ to the overall output $\mathbf{Y}$. 

Furthermore, the proof reveals that the output of self-attention can be decomposed into components that depend primarily on different parts of the input (i.e., $\mathbf{X}_0$ and $\mathbf{U}$). 
While there are cross-terms in the attention matrix $(\mathbf X_0)^{i}\mathbf W_q\mathbf W_\mathrm{key}^\top (\mathbf X_0)^{j\top}$, these only introduce positive scaling factors (e.g., functions of $g_i$) to components (e.g. $\mathbf Y_x$, $\mathbf Y_u$) that are not dependent on the control input, allowing us to derive an analytic bound on the reachable output set for self-attention via $\mathbf Y_x^{\operatorname{max}}$ (see Equations~\ref{eq:bound}-\ref{eq:simplified_Ymin_def},\ref{eq:equivalent}). 

The implications of Theorem~\ref{thm:attention-control} are further discussed in Section~\ref{sec:discussion}. See Appendix~\ref{app:proof-control-llms} for proofs, including Section~\ref{app:more_general_theorem} for a more general statement of reachability conditions in terms of the perpendicular and orthogonal components of $\mathbf Y^*$ and $\mathbf Y$.

\section{Experiments}

To gain a practical, empirical understanding of the reachable set $\mathcal R_y^k(\mathbf x_0)$, we probe the existence of optimal prompts $\mathbf u^*$ across datasets $\mathcal D$ of initial state--desired output pairs $(\mathbf x_0, y^*)$. 
We scope our experiments to study immediate control (i.e., we check the LLM output after $|y^*|$ tokens are generated) where the control input $\mathbf u$ is prepended to the imposed state $\mathbf x_0$. 
Moreover, we focus on the case of controlling the LLM system to produce a single output token $y^* \in \mathcal V$ under some constraint $|u| \leq k$.
This ``single-step'' control renders the problem of gauging reachability computationally tractable and is a fundamental step toward understanding the iterated dynamics of LLM systems in terms of reachability and controllability. 
We leave the exploration of reachability and controllability under an extended time horizon (e.g., chain-of-thought, chatbot dynamics, tool-wielding LLMs) and under the requirement of multi-token outputs $\mathbf y$ to future work.

\label{sec:experiments}
\subsection{Methods}
\label{sec:methods}

We apply prompt optimization algorithms to establish the existence of optimal prompts $\mathbf u^*$ of length $k$ that steer the LLM system from initial state $\mathbf x_0$ to output $y$ for some dataset $\mathcal D$ of initial state-output pairs. 
In general, prompt optimization algorithms accept a token sequence and a loss function on said token sequence, along with a specification of which tokens are manipulable. 
The output of a prompt optimizer is a manipulated token sequence (i.e., optimized prompt) designed to minimize the loss. 
We apply two computational methods to generating optimal prompts: greedy back-generation (algorithm \ref{alg:greedy}) and greedy coordinate gradient (GCG, invented in \cite{zou2023universal}, algorithm \ref{alg:gcg}). 
We found that greedy back-generation performed best for short prompts $k\leq 3$ tokens, while GCG was the best-performing algorithm for prompts of 4 or more tokens. 
To our knowledge, our greedy back-generation algorithm is novel. 
For brevity, we place the full description of the algorithms and our parameter values for the two algorithms in Appendix~\ref{sec:algs}, as the specifics of the algorithms are not the main contribution of this work.

We focus on understanding the content and structure of the reachable set of LLM system outputs $\mathcal R_y^k(\mathbf x_0)$, particularly under a constraint on the number of input tokens $k$. 
To determine which output tokens are reachable under varying input sequence lengths, we apply an incremental prompt lengthening procedure when searching for optimal prompts on some dataset $\mathcal D$. 

\begin{algorithm}[tb]
\caption{Back-off Prompt}
\label{alg:back-off-prompt}
\begin{algorithmic}[1]
\REQUIRE State-output token sequence $(\mathbf{x}0, y)$; LLM system $\Sigma = (P{LM}, \mathcal{V})$.
\FOR{$k=1$ \TO $3$}
\STATE $\mathbf{u}_k = \text{Greedy Back Generate}(\mathbf{x}_0, y; \Sigma)$
\IF{$\mathbf{u}_k$ steers $\Sigma$ from $\mathbf{x}_0 \to y$}
\RETURN $\mathbf{u}_k$
\ENDIF
\ENDFOR
\FOR{$k \in {4, 6, 8, 10}$}
\STATE $\mathbf{u}_k = \text{Greedy Coordinate Gradient}(\mathbf{x}_0, y; \Sigma)$
\IF{$\mathbf{u}_k$ steers $\Sigma$ from $\mathbf{x}_0 \to y$}
\RETURN $\mathbf{u}_k$
\ENDIF
\ENDFOR
\RETURN Failed to establish reachability.
\end{algorithmic}
\end{algorithm}

\subsection{Results}
\label{sec:results}

Our results revolve around the reachable set $\mathcal R_y^k(\mathbf x_0)$ for state sequences sampled from the Wikitext dataset. 
Results were computed for a panel of models, including Falcon-7b, Falcon-40b, and Llama-7b. 
Falcon-7b results are showcased in this section while additional plots and results for Falcon-40b and Llama-7b can be found in Section~\ref{sec:sup_figs}.
We applied the same Back-off Prompt strategy (Algorithm~\ref{alg:back-off-prompt}) to determine $k$-$\epsilon$ controllability for all experiments, varying the specifics of the dataset $\mathcal D$ for each experiment.

\paragraph{``Ground truth'' reachability: } We established the reachability of the ``ground truth'' next token $y$ proceeding state token sequence $\mathbf x_0$ in Wikitext. 
In our tests on a dataset of 5000 state-output sequences with states of length $8-32$ tokens, we found that the true next token $y$ is reachable over 97\% of the time across all models with a prompt of length $k\leq 10$ (Figure~\ref{fig:falcon_7b_k-e}). Plots and supplementary figures for Falcon-40b and Llama-7b controllability w.r.t. ground truth Wikitext outputs can be found in Section~\ref{sec:sup_main}.

\begin{figure}[ht]
    \centering
    \begin{minipage}{0.45\columnwidth}
        \includegraphics[width=\linewidth]{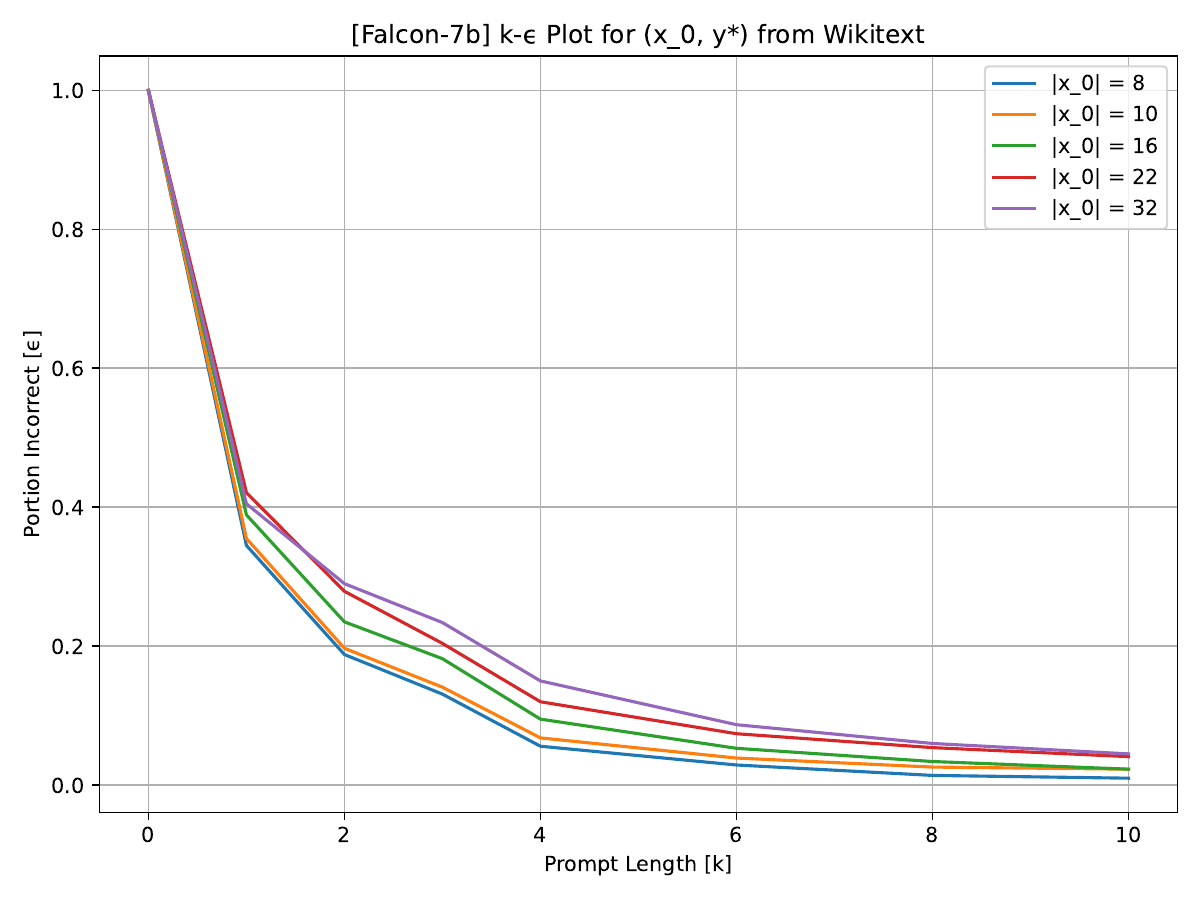}
    \end{minipage}%
    \hfill
    \begin{minipage}{0.45\columnwidth}
        \includegraphics[width=\linewidth]{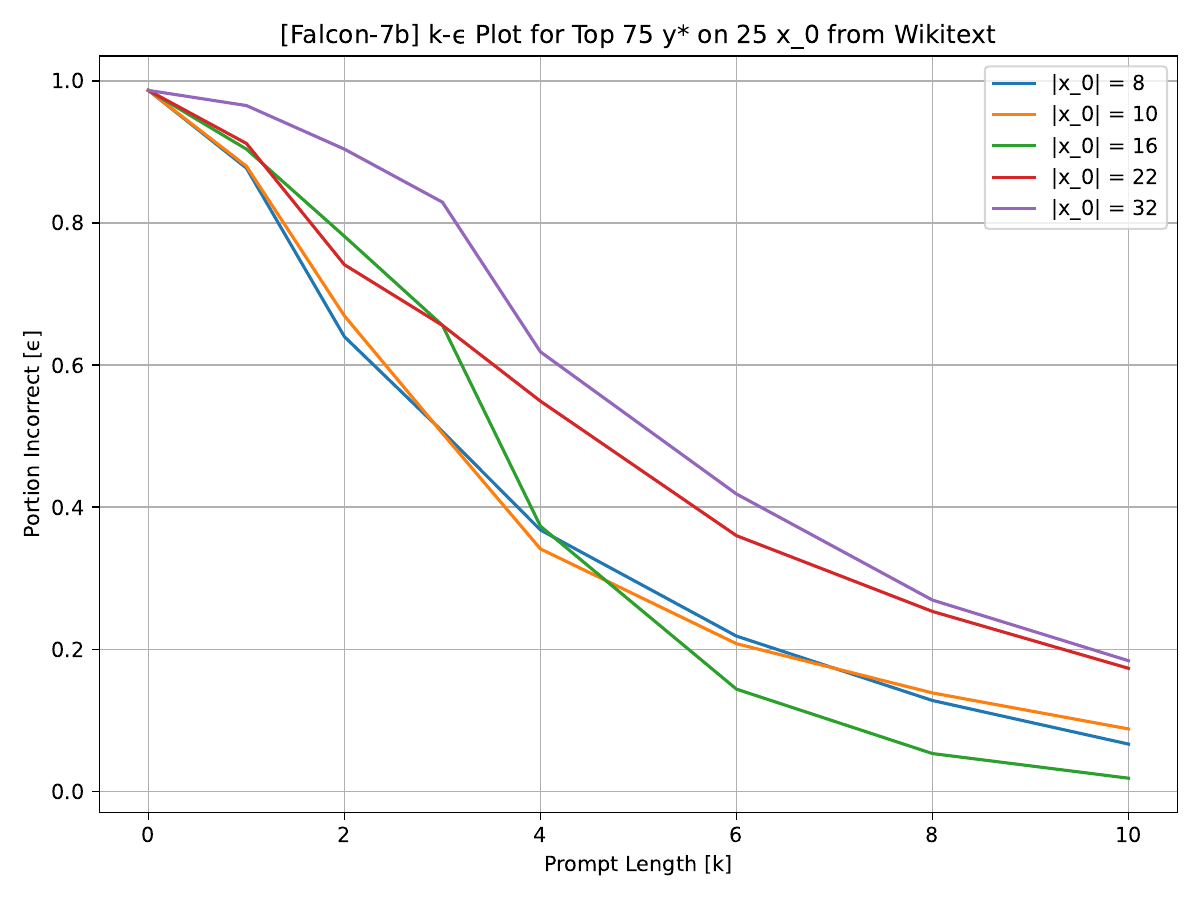}
    \end{minipage}
    
    \vspace{0.5cm} %
    
    \begin{minipage}{0.45\columnwidth}
        \includegraphics[width=\linewidth]{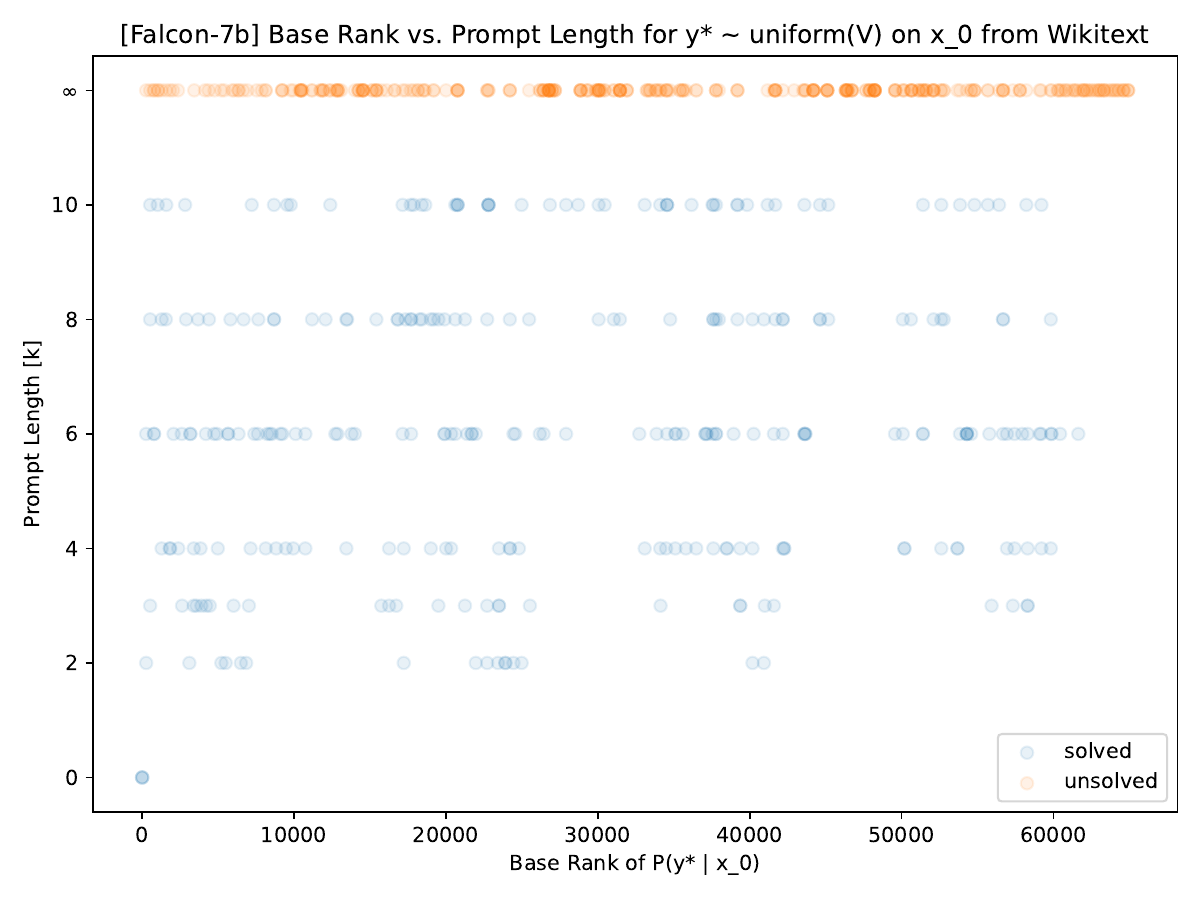}
    \end{minipage}%
    \hfill
    \begin{minipage}{0.45\columnwidth}
        \vspace{0pt} %
        \caption{
            \label{fig:falcon_7b_k-e}\textbf{Top Left}: $k$-$\epsilon$ values on initial state $\mathbf{x}_0$ and target output token $y^*$ from Wikitext. 97.16\% of the instances were solved with a prompt of length $k\leq 10$.\\
            \label{fig:falcon_7b_top75}\textbf{Top Right}: $k$-$\epsilon$ values reaching the top 75 most likely outputs $y^*$ for each $\mathbf{x}_0$ from Wikitext. The top 75 targets were reachable at least 89.39\% of the time with a prompt of length $k\leq 10$.\\
            \label{fig:deep_falcon_rank_k}\textbf{Bottom Left}: Prior likelihood rank of target token $y^*$ versus required prompt length to elicit $y^*$. Target tokens were sampled uniformly from the least to most likely token given $\mathbf{x}_0$ sampled from Wikitext.
        }
    \end{minipage}
\end{figure}

\paragraph{Top-75 reachability: } To explore the reachable set $\mathcal R_y^k(\mathbf x_0)$ beyond the ground truth of Wikitext outputs, we generated a synthetic dataset of outputs by sampling 25 Wikitext sequences $\mathbf x_0$ and selecting the top 75 most likely next-tokens according to the model itself $P_{LM}(y | \mathbf x_0)$ as the target tokens (Figure~\ref{fig:falcon_7b_top75}). We found that the top 75 output tokens were reachable over 85\% of the time for all models with control sequence length $k=10$. Supplementary figures including results for Llama-7b and Falcon-40b on $k$-$\epsilon$ controllability with respect to the top 75 most likely output tokens can be found in Section~\ref{sec:sup_75}.

\paragraph{Uniformly sampled target outputs: } To maximally push the bounds of the reachable set within our single output token scope, we created another synthetic dataset where the target output token $y^*$ was sampled uniformly from the highest likelihood next token to the lowest likelihood token. 
Although the overall $k$-$\epsilon$ score was relatively poor (only 46.43\% reachable with $k=10$ for Falcon-7b), we were intrigued by the near-uniform relationship between prior token rank (based on $P_{LM}(y | \mathbf x_0)$) versus the required number of prompt tokens. 
Figure~\ref{fig:deep_falcon_rank_k} plots the relationship between prior target token rank based on $P(y^* | \mathbf x_0)$ and the required prompt length $k$ to elicit the prompt. While over half were unreachable, the remaining reachable tokens appear uniformly distributed in terms of required prompt length, regardless of rank. Supplementary figures analyzing the $k$-$\epsilon$ controllability of Falcon-7b with respect to uniformly sampled target outputs $y$ can be found in Section~\ref{sec:sup_deep}.

\section{Discussion}
\label{sec:discussion}

We proposed a control theoretic framework for understanding language model prompting, orienting our investigation around the reachable set of outputs $\mathcal R_y^k(\mathbf x_0)$.
We proved a bound on the reachable set of outputs for self-attention in terms of the singular values of its weight matrices, and we established fundamental results on the reachability of ``correct'' next tokens (according to Wikitext). We expanded the scope of this investigation by probing the reachability of tokens assigned high likelihood by the LLM itself (top 75 most likely next tokens), and tokens assigned minimal likelihood by the LLM itself (randomly sampled target tokens).

The Self-Attention Control Theorem (Theorem~\ref{thm:attention-control}) provides a sufficient condition for the unreachability of a desired output $\mathbf{Y}^*$ in terms of the projection of a single row of $\mathbf Y^{max}_{x} = \Xi(\mathbf X_0; \boldsymbol{\theta})$ onto the orthogonal complement of $\mathbf{Y}^*$. 
If the orthogonal component of $\mathbf Y^{\rm max}_x$ exceeds $k\gamma$, then no prompt of length $\leq k$ can steer the self attention head to output $\mathbf Y^*$ under the input constraints. 
The threshold $k \gamma_i(\mathbf{X}_0, \boldsymbol{\theta})$ depends on the imposed input $\mathbf{X}$, the number of control tokens $k$, and the maximum singular values of the query, key, and value weight matrices, $\boldsymbol{\theta}=(\mathbf W_k, \mathbf W_q, \mathbf W_v)$. 
Intuitively, this result suggests that if the output $\mathbf{Y} = \mathbf{Y}_x+\mathbf{Y}_u$ has component $\mathbf Y_x$ too large and misaligned with $\mathbf Y^*$, then no control input with $k$ or fewer tokens can yield a component $\mathbf Y_u$ that corrects the misalignment -- even if control inputs $\mathbf U$ yield maximal influence on the output under the $k$-token limit (Figure~\ref{fig:attention_figure}`).

Bounding the reachable set for self-attention is deeply related to the mechanism by which consistent representations are formed for multi-token generation.
Steering a language model to generate a desired token sequence requires that the control input induce a token representation in the right-most token such that the next token prediction logits $P(\mathbf y | \mathbf u + \mathbf x_0)$ achieves a desired value. 
Moreover, generated tokens are fed back into the model, and their representations must be steered as well to control iterated generation.
Self-attention is the primary mechanism by which the token representations exchange information, making the reachable set of output representations across multiple tokens in $\mathbf X_0$ for self-attention a fundamental part of LLM control theory.  The Self-Attention Control Theorem provides a step towards understanding the limitations and possibilities of controlling the self-attention layer, and by extension, the language model as a whole.

Our empirical results suggest that there is far more to the reachability of a given output than just prior likelihood or the prior rank the LLM assigns to a given token. 
Although prompt optimization-based $k$-$\epsilon$ controllability experiments are only able to provide a lower bound on the content of the reachable set, the ability to frequently control even the \textit{least likely} token to being the \textit{most likely} token with just a few input tokens is intriguing (Figure~\ref{fig:deep_falcon_rank_k}, bottom right). 
This result indicates the importance of further investigating the reachability and controllability of LLMs, particularly for developing capable and reliable LLM systems.

Our investigations provide an entry into the understanding of LLM controllability via prompts. 
However, a comprehensive understanding necessitates extending our exploration into diverse regimes. 
Exploring the controllability with longer prompts and longer questions (base token sequences) will be pivotal. 
Equally important is the study of diverse models to verify the generality of our findings. 
The direct comparison of controllability scores of different model families is challenging since each model family uses a different tokenizer. The Llama family tokenizer, for instance, has a vocabulary of 30,000 tokens whereas the Falcon family has a vocabulary of 65,536 tokens. Further work is required to robustly compare controllability across models.

An intriguing observation from our study is the log-linear relationship between prompt length $k$ and controllability fraction $\epsilon$ (see Figure~\ref{fig:log_main} in Appendix~\ref{sec:sup_figs}). 
While this is compelling within our studied domain, it raises the essential question: is this relationship robust outside our current explorative scope? 
Unearthing universal scaling laws in LLM controllability would not only inform practical control applications but also open the door for theoretical insight into the nature of LLM behavior.

The progress we have made, both in understanding the bounds on self-attention controllability and the empirical measures of $k$-$\epsilon$ LLM controllability, underscores the potential of this control theoretic framing for studying LLMs. 
Below is a non-exhaustive list of open problems in LLM control, all stemming from the framing in section~\ref{sec:definitions_ctrl}: 
\begin{itemize}
    \item \textbf{Control Properties of Chain-of-Thought:} Chain-of-Thought is a powerful technique where LLMs are allowed to generate intermediate tokens (i.e., ``thoughts'') between a question and an answer \citep{wei2023chainofthought}. 
    The control properties (e.g., stability, reachability) of systems leveraging these techniques are of great interest for understanding and composing systems of LLMs in the real world.  
    \item \textbf{Distributional Control:} To what extent can we control the output distribution of a language model $P_{LM}(\mathbf y | \mathbf x_0 + \mathbf u)$ to a desired distribution $P^*(\mathbf y)$?
    \item \textbf{Computational Cost of Control:} What are the performance characteristics of LLM control regularized by computational cost? 
    \item \textbf{Learnability of Control:} To what extent can LLMs learn to control each other? Work such as \cite{zhou2023large} showed that LLMs are capable of human-level prompt engineering, but it is unclear how well an LLM can learn to control another when explicitly optimized on the objective of LLM control. 
    \item \textbf{Controllable Subspaces:} In the control of linear dynamical systems, it is known that uncontrollable systems are often coordinate transformable into a representation where a subset of the coordinates are controllable and a subset are uncontrollable \cite{control_bible}. We have shown that controllable and uncontrollable components naturally emerge for self-attention heads in section~\ref{sec:theorem} -- can this be generalized to transformer blocks with nonlinearities and residual streams? 
    \item \textbf{Composable LLM Systems:} One of the greatest boons of control theory is the ability to compose control modules and subsystems into an interpretable, predictable, and effective whole \citep{lian2002network}. 
    The composition of LLM systems (potentially with non-LLM control modules) is an exciting avenue for scaling super intelligent systems.
\end{itemize}

Practically, our findings lay the groundwork for more effective and efficient prompt engineering. The ability to control even the least likely tokens illuminates untapped capabilities within LLMs, hinting at a potentially broader spectrum of application than previously recognized. Such insights could lead to the development of more nuanced and sophisticated LLM systems, capable of handling tasks with greater precision and adaptability.

\newpage
\section*{Impact statement}

This paper introduces foundational work aimed at enhancing our understanding and control of generative language models (LLMs) as they become integral to crucial societal functions. 
The increasing integration of generative AI into critical infrastructures — such as healthcare data analysis, insurance and financial data processing, and emergency response systems — underscores the urgency for a sophisticated control theory. 
Drawing on the principles of control theory, which have historically ensured the dependability of machines in life-or-death scenarios (e.g., in cruise control and aircraft navigation systems), our goal is to extend these guarantees to LLM-based applications. 
By doing so, we aim to make these advanced AI systems as trustworthy and robust as their electro-mechanical counterparts, thereby securing their role in supporting and safeguarding society.

\section*{Code availability}

All code used to produce the experimental results is provided with the submission.

\bibliography{bhargava}
\bibliographystyle{ieeetr}

\newpage
\appendix
\onecolumn
\section{Abstract Systems and Control Theory Background}
\label{sec:definitions_ctrl}

This section aims to provide an overview of fundamental control-theoretic concepts from an abstract, set-theoretic perspective. We primarily draw from canonical textbooks \cite{control_bible, kalman1969topics}, and \cite{ogata2010modern}.

Diverse definitions of ``system'' or ``machine'' exist in the literature, all representing the same core concept but varying in mathematical details. We offer the following high-level definition based on \cite{control_bible} Chapter 2: 

\begin{definition}[System] \label{def:system}
A ``system'' or ``machine'' $\Sigma = (\mathcal{T, X, U}, \phi)$ consists of: 
\begin{itemize}
    \item $\mathcal T:$ The \textbf{time set} along which system state evolves. 
    \item $\mathcal X: $ The \textbf{state space}.
    \item $\mathcal U:$ The \textbf{input space}.
    \item $\phi: \mathcal{X \times U \times T}^2 \to \mathcal X: $ The \textbf{transition map}. 
\end{itemize}
A system may also be equipped with an output space and readout map $(\mathcal Y, h)$: 
\begin{itemize}
    \item $\mathcal Y:$ The \textbf{output space}. 
    \item $h: \mathcal{X \times U \times T}\to \mathcal Y: $ The \textbf{readout map}.
\end{itemize}
\end{definition}
In other words, at time $t\in \mathcal T$, the system's state takes on values $x \in \mathcal X$, and the control input takes values $u \in \mathcal U$. The system evolves over time with the transition map $\phi(x, u, t, t')$ that returns the new state value $x'\in \mathcal X$ at time $t'>t$. A system can also have a readout map $h(x, u, t)$ that produces the output value $y\in \mathcal Y$ given the current time, state, and input value. An input $u\in \mathcal U$ defined over interval $[t, t']$ may be said to \textit{steer the system} $\Sigma = (\mathcal{T, X, U}, \phi)$ from state $x_0$ to state $x'$ if $x' = \phi(x_0, u, t, t')$. A wide variety of systems are expressible within this framework. E.g., we obtain discrete-time dynamical systems for $\mathcal T = \mathbb Z^+$. Continuous-time dynamical systems emerge for $\mathcal T = \mathbb R^+$. 
We apply Definition~\ref{def:system} to formulate LLM systems in Definition~\ref{def:llm-system}. 

Note that we assume that the system $\Sigma$ is time-invariant; its dynamics $\phi$ do not change as a function of time. This assumption is widely applicable and is often made in the literature \citep{kalman1969topics, ogata2010modern, control_bible} to simplify definitions and discussions of systems.

Reachability is a core control theory concept and is central to defining controllability. At their core, definitions of reachability revolve around the existence of control inputs $u\in \mathcal U$ that steer the system from a starting state $x_0 \in \mathcal X$ to some desired state(s). Following from Chapters 1-2 of \cite{kalman1969topics} and Chapter 2 of \cite{control_bible}, we define state reachability as: 

\begin{definition}[State Reachability]
\label{def:state-reachability}
State $x \in \mathcal X$ is reachable from initial state $x_0\in \mathcal X$ for system $\Sigma=(\mathcal{T, X, U}, \phi)$ iff there exists some time $T$ and control input $u^* \in \mathcal U$ such that $u^*$ steers the system from state $x_0$ to state $x$ at time $T$.
\end{definition}

We may use this definition of state reachability to define the reachable state set for some initial state $x_0 \in \mathcal X$:

\begin{definition}[Reachable State Set]
\label{def:reachable-state-set} 
The reachable state set from initial state $x_0 \in \mathcal X$ for system $\Sigma = (\mathcal {T, X, U}, \phi)$ is denoted $\mathcal R(x_0) \subseteq \mathcal X$ and consists of all reachable states $x\in \mathcal X$ from initial state $x_0$ (cf. Definition~\ref{def:state-reachability}). 
\end{definition} 

For systems with readout maps $h$, notions of \textit{output reachability} arise naturally. Note that state reachability is neither necessary nor sufficient to guarantee output reachability. 

\begin{definition}[Output Reachability]
\label{def:output-reachability}
Output $y \in \mathcal Y$ is reachable from initial state $x_0 \in \mathcal X$ for system $\Sigma=(\mathcal{T, X, U}, \phi, \mathcal Y, h)$ iff there exists some time $T$ and control input $u^* \in \mathcal U$ such that $u^*$ steers the system from state $x_0$ to output $y$ in time $T$. 
\end{definition}

\begin{definition}[Reachable Output Set]
\label{def:reachable-output-set}
The reachable output set from initial state $x_0 \in \mathcal X$ for system $\Sigma = (\mathcal{T, X, U}, \phi, \mathcal Y, h)$ is denoted $\mathcal R_y(x_0)$ and consists of all reachable outputs $y\in \mathcal Y$ from initial state $x_0$ (cf. Definition~\ref{def:output-reachability}). 
\end{definition}

A system is controllable when the reachable set extends to the entire state space. Practically speaking, this implies that one can steer the system from any initial state to any desired state. 
We develop the reachable set for LLM systems in Definition~\ref{def:llm-reachable-output-set} and LLM reachability in Definition~\ref{def:llm-output-reachability}.

\begin{definition}[State Controllability]
\label{def:state-controllability} 
System $\Sigma = (\mathcal{T, X, U}, \phi)$ is state controllable iff, for every initial state $x_0\in \mathcal X$, the reachable set $\mathcal R(x_0) = \mathcal X$.
\end{definition}

\begin{definition}[Output Controllability] 
\label{def:output-controllability}
System $\Sigma = (\mathcal{T, X, U}, \phi, \mathcal Y, h)$ is output controllable iff, for every initial state $x_0\in \mathcal X$, the reachable output set $\mathcal R_y(x_0) = \mathcal Y$.
\end{definition}

A range of fruitful questions stem from these definitions: if there is a cost associated with control inputs $u\in \mathcal U$ (e.g., power constraints, length constraints), what is the minimum cost of control? What is the minimum time required to get from the initial state to the desired final state or output? If the system is not completely controllable, under what conditions is it controllable? Under which readout maps is a system output controllable? 
We develop controllability for LLMs abstractly in Definition~\ref{def:llm-output-controllability} and in an empirically/statistically testable fashion in Definition~\ref{def:k-eps}.

\section{Theory on Self-Attention Controllability}
\label{app:proof-control-llms}
\textit{Note: Key terms for the proof are introduced in Section~\ref{sec:theorem} surrounding Theorem~\ref{thm:attention-control}. Specifically, the definition of self-attention mechanism $\Xi$, the control problem setup, and the reachable set $\mathcal R_y^k(\mathbf X_0)$ are required background for this proof.}

\textbf{Notation:} For each token representation matrix $\mathbf{Q, K, V} \in \mathbb R^{(k+m) \times \cdot}$, we denote the first $k$ rows corresponding to $\mathbf U$ using $u$ as a subscript, like $\mathbf Q_u$. The remaining $m$ rows corresponding to $\mathbf X_0$ are denoted with subscript $x$ like $\mathbf Q_x$. 

\subsection{Proof of Theorem~\ref{thm:attention-control}}
\label{app:main_proof}
    Let $\mathbf A$ be the exponentiated query-key outer product matrix with the following block structure: 
    \begin{equation}
        \label{eqn:def-A}
        \mathbf A = \exp \begin{pmatrix}
            \frac{
                \textbf{Q K}^\top
            }{
                \sqrt{d_{\rm key}}
            }
        \end{pmatrix}
        = \exp \begin{pmatrix} 
            \begin{bmatrix}
                \mathbf{Q}_u \mathbf{K}_u^\top & \mathbf{Q}_u \mathbf{K}_x^\top \\
                \mathbf{Q}_x \mathbf{K}_u^\top & \mathbf{Q}_x \mathbf{K}_x^\top
            \end{bmatrix}
            \frac{1}{\sqrt{d_{\rm key}}}
        \end{pmatrix}
         = \begin{bmatrix}
             \mathbf A_{uu} & \mathbf A_{ux} \\ 
             \mathbf A_{xu} & \mathbf A_{xx} 
         \end{bmatrix}
    \end{equation}
    where $\mathbf Q_u = \mathbf U \mathbf W_q$, $\mathbf K_x = \mathbf X_0 \mathbf W_{\rm key}$, and similarly for $\mathbf K_u, \mathbf Q_x$. 
    We apply a similar quadrant decomposition to $\mathbf{D}$, defined initially in Equation~\ref{eqn:denominator-definition}.
    \begin{equation}
        \label{eqn:denom-decomp}
        \mathbf{D} = \text{diag}\begin{pmatrix} \exp\begin{pmatrix} \frac{\mathbf{QK}^\top}{\sqrt{d_{\rm key}}} \end{pmatrix} \mathbf 1_{N\times 1} \end{pmatrix}
        = \begin{bmatrix}
            \mathbf{D}_u & \mathbf 0 \\ 
            \mathbf 0 & \mathbf{D}_x \\ 
        \end{bmatrix}
    \end{equation}

    where the quadrant demarcations in $\mathbf{D}$ follow from Equation~\ref{eqn:def-A}. 

    We may now express the self-attention mechanism output representations $\mathbf Y$ as 
    \begin{equation}
        \label{eqn:y-decomp}
        \mathbf Y = \underbrace{\mathbf{D}_x^{-1} \mathbf A_{xu} \mathbf V_u}_{\mathbf Y_u} + \underbrace{\mathbf{D}_x^{-1} \mathbf A_{xx} \mathbf V_x}_{\mathbf Y_x}
    \end{equation}

\begin{lemma}
For any control input $\mathbf{U}$ whose rows satisfy $\|\mathbf{U}^j\| \leq M_u$ for all $j \in \{1, \dots, k\}$, the norm of the $i$-th row of $\mathbf Y_u$ is bounded as follows
    $$\|\mathbf{Y}_u^i\| \leq \beta_i(\mathbf{X}_0, k)$$ where
    \begin{equation}
    \beta_i(\mathbf{X}_0, k):=\frac{ ke^{\alpha}}{g_i(\mathbf X_0,\boldsymbol{\theta}) + ke^{\alpha} } \sigma_v M_u, 
        \label{eq:beta_defn}
    \end{equation} and $$\alpha = \sigma_q \sigma_{\rm key} M_u M_x / \sqrt{d_{\rm key}}, \qquad  g_i(\mathbf{X}_0,\boldsymbol{\theta}) := \mathbf{D}_{xx}^i=\sum_{j=1}^m \exp\left( (\mathbf X_0)^i \mathbf{W}_q \mathbf{W}_{\rm key}^\top (\mathbf X_0)^{j\top}/\sqrt{d_{\rm key}}\right).$$ 
\end{lemma}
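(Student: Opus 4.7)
My plan is to work directly from the closed form $\mathbf Y_u = \mathbf D_x^{-1} \mathbf A_{xu} \mathbf V_u$ given in Equation~\eqref{eqn:y-decomp}. Fixing a row index $i \in \{1,\dots,m\}$, I would use the fact that $\mathbf D_x$ is diagonal to write
\begin{equation*}
    \mathbf Y_u^i \;=\; \frac{1}{(\mathbf D_x)^{ii}} \sum_{j=1}^{k} (\mathbf A_{xu})^{ij}\, \mathbf V_u^j,
\end{equation*}
and then decompose the denominator as $(\mathbf D_x)^{ii} = \sum_{j=1}^k (\mathbf A_{xu})^{ij} + \sum_{j=1}^m (\mathbf A_{xx})^{ij} = S + g_i$, where $S := \sum_{j=1}^k (\mathbf A_{xu})^{ij}$ and $g_i$ is exactly the quantity defined in the lemma statement. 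This way, the control-input dependence of $\mathbf Y_u^i$ is isolated into $S$, the entries of $\mathbf A_{xu}$, and the rows of $\mathbf V_u$, while the $\mathbf X_0$-dependent contribution is exactly $g_i$.

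The next step is to extract scalar bounds on the three $\mathbf U$-dependent quantities using the norm constraint $\|\mathbf U^j\|\le M_u$ and the singular value hypotheses. By the triangle inequality and submultiplicativity, $\|\mathbf V_u^j\| = \|\mathbf U^j \mathbf W_v\| \le \sigma_v M_u$. For each entry of $\mathbf A_{xu}$, I write the exponent as $(\mathbf X_0)^i \mathbf W_q \mathbf W_{\rm key}^\top (\mathbf U^j)^\top / \sqrt{d_{\rm key}}$ and bound its absolute value by Cauchy–Schwarz together with the operator-norm bounds $\|\mathbf W_q\|_{\mathrm{op}} \le \sigma_q$, $\|\mathbf W_{\rm key}\|_{\mathrm{op}} \le \sigma_{\rm key}$, giving $|(\cdot)| \le \sigma_q \sigma_{\rm key} M_u M_x/\sqrt{d_{\rm key}} = \alpha$. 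Since these entries are positive, this yields $(\mathbf A_{xu})^{ij} \le e^{\alpha}$ and hence $S \le k e^{\alpha}$. Combining with the triangle inequality gives
\begin{equation*}
    \|\mathbf Y_u^i\| \;\le\; \frac{1}{S + g_i} \sum_{j=1}^{k} (\mathbf A_{xu})^{ij} \,\|\mathbf V_u^j\| \;\le\; \frac{S}{S + g_i}\, \sigma_v M_u.
\end{equation*}

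The final and most delicate step is converting the data-dependent quantity $S$ into the clean expression $ke^{\alpha}$ that appears in $\beta_i$. A naive substitution $S \le ke^{\alpha}$ in the numerator alone would lose the normalization; instead, I exploit the monotonicity of $z \mapsto z/(g_i + z)$ for $z \ge 0$ (with $g_i > 0$), which is strictly increasing. Since $S \in [0, ke^{\alpha}]$, the fraction $S/(S+g_i)$ is maximized at the upper endpoint, giving $\|\mathbf Y_u^i\| \le \frac{ke^{\alpha}}{g_i + ke^{\alpha}} \sigma_v M_u = \beta_i(\mathbf X_0, k)$ as claimed.

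The main conceptual obstacle is precisely this last substitution: because the softmax denominator couples $\mathbf U$ and $\mathbf X_0$, one must resist the temptation to bound numerator and denominator independently (which would give a trivial or misleading bound) and instead recognize that the scalar $S$ appears in both. Once monotonicity of $z/(g_i+z)$ is invoked, the rest is routine; the operator-norm and triangle-inequality bookkeeping in the earlier steps is standard and requires no further tricks.
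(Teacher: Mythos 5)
Your proposal is correct and follows essentially the same route as the paper's proof: the same row-wise decomposition of the softmax denominator into $g_i + \mathbf{D}_{xu}^i$, the same Cauchy--Schwarz/singular-value bound giving $(\mathbf{A}_{xu})^{ij} \le e^{\alpha}$, and the same monotonicity argument for $z \mapsto z/(z+g_i)$ to pass from the data-dependent sum to $ke^{\alpha}$. The only cosmetic difference is that you bound $\bigl\|\sum_j (\mathbf{A}_{xu})^{ij}\mathbf{V}_u^j\bigr\|$ by a direct triangle inequality, where the paper phrases the same step as replacing $\mathbf{V}_u$ with a constant vector of norm $\sigma_v M_u$.
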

\begin{proof}
Our objective is to establish an upper bound on $\|\mathbf{Y}_u^i\|$, the Euclidean norm of the $i$-th row of the matrix $\mathbf{Y}_u$, which represents the contribution of the control input to the output of the self-attention layer. 
$g_i = \mathbf D_{xx}^i$ represents the component of the row-wise softmax denominator $\mathbf D_x$ from $\mathbf A_{xx}$ (solely a function of $\mathbf X_0$). 
Similarly, $\mathbf D_{xu}$ represents the component of $\mathbf D_x$ from $\mathbf A_{xu}$, and $\mathbf D_x = \mathbf D_{xx} + \mathbf D_{xu}$. 
We observe that $\mathbf{D}_{xu}^i$ is the sum of the entries in the $i$-th row of $\mathbf{A}_{xu}$:
\begin{equation}
\mathbf{D}_{xu}^i = \sum_{j=1}^k (\mathbf{A}_{xu})_{ij} = \sum_{j=1}^k \exp(\langle \mathbf{Q}_x^i, \mathbf{K}_u^j \rangle / \sqrt{d_{\rm key}}),
\end{equation}
where $\mathbf{Q}_x^i$ and $\mathbf{K}_u^j$ denote the $i$-th row of $\mathbf{Q}_x$ and the $j$-th row of $\mathbf{K}_u$, respectively. Every entry of the diagonal matrix $\mathbf{D}_{xu}$ is strictly positive.

Recall that $\mathbf D_x=\mathbf D_{xx}+\mathbf D_{xu}$ and $\mathbf V_u=\mathbf U \mathbf W_v$. We begin by expressing the $i$-th row of $\mathbf{Y}_u$ as:
\begin{equation}
\mathbf{Y}_u^i = (\mathbf{D}_{xx}^i + \mathbf{D}_{xu}^i)^{-1} (\mathbf{A}_{xu}^i \mathbf{V}_u^i),
\end{equation}
where $\mathbf{D}_{xx}^i$ and $\mathbf{D}_{xu}^i$ denote the $i$-th diagonal entries of the matrices $\mathbf{D}_{xx}$ and $\mathbf{D}_{xu}$, respectively, $\mathbf{A}_{xu}^i$ represents the $i$-th row of the matrix $\mathbf{A}_{xu}$, and $\mathbf{V}_u^i$ corresponds to the $i$-th row of the matrix $\mathbf{V}_u$.

Let $\alpha_{ij} := (\mathbf A_{xu})_{ij} = \langle \mathbf{Q}_x^i, \mathbf{K}_u^j \rangle / \sqrt{d_{\rm key}} \leq \alpha$ for all $i,j$ where $\alpha$ is defined to be an upper bound on the scaled key-query dot products between vectors in $\mathbf U$ and $\mathbf X$ given by 
\begin{equation}
    \label{eqn:alpha_defn}
    \alpha = \sigma_q \sigma_{\rm key} M_u M_x / \sqrt{d_{\rm key}}.
\end{equation}
Recall that $\sigma_q, \sigma_{\rm key}$ are the maximal singular values of $\mathbf W_q, \mathbf W_{\rm key}$ respectively. 
 
By applying the Cauchy-Schwarz inequality and using the definitions of $\alpha$, $M_u$, and $M_x$, we can perform the bound $(\mathbf{A}_{xu})_{ij} \leq e^\alpha$ for all $i,j$, and thus:
\begin{equation}
\mathbf{D}_{xu}^i = \mathbf{A}^i_{xu}\mathbf 1 \leq \sum_{j=1}^k e^\alpha = ke^\alpha.
\label{eq:ke_alpha}
\end{equation}
where $\mathbf{1}$ is a constant vector consisting of all entries equal to 1.

Next, we note that $\|\mathbf{V}_u^i\| \leq C$, where $C := \sigma_v M_u$, and $\sigma_v$ denotes the maximum singular value of the value projection matrix $\mathbf{W}_v$. That fact follows directly the definition of $\mathbf V_u$. This allows us, while we are bounding $\|\mathbf{Y}_u^i\|$, to replace $\mathbf{V}_u^i$ with a constant vector $\mathbf{C}$ whose entries are all equal to $C$, yielding an upper bound on $\|\mathbf{A}_{xu}^i \mathbf{V}_u^i\|$:
\begin{equation}
\|\mathbf{A}_{xu}^i \mathbf{V}_u^i\| \leq \|\mathbf{A}_{xu}^i \mathbf{C}\| = C \sum_{j=1}^k (\mathbf{A}_{xu})_{ij} = C \mathbf{D}_{xu}^i.
\end{equation}

We now rewrite the norm $\|\mathbf{Y}_u^i\|$. Toward that end, let $g_i(\cdot, \boldsymbol{\theta}) : \mathbb R^{m\times d}\to [0,\infty)$ denote the function of $\mathbf X_0$ defined by $g_i(\mathbf{X}_0, \boldsymbol{\theta}) := \mathbf{D}_{xx}^i$.
\begin{align}
\|\mathbf{Y}_u^i\| 
&= \frac{\|\mathbf{A}_{xu}^i \mathbf{V}_u^i\|}{\mathbf{D}_{xx}^i + \mathbf{D}_{xu}^i} \\
&\leq \frac{\|\mathbf{A}_{xu}^i\mathbf{C}\|}{\mathbf{D}_{xx}^i + \mathbf{D}_{xu}^i} = \frac{\langle \mathbf{A}_{xu}^i,\mathbf{C}\rangle}{\mathbf{D}_{xx}^i + \mathbf{A}_{xu}^i \mathbf{1}} \\
&\leq \frac{C ke^{\alpha}}{g_i + ke^{\alpha} }\label{eq:bound}
\end{align}
The final line follows from \eqref{eq:ke_alpha} and the observation that the function $f(x)  := x/(x+g_i)$, where $g_i>0$ is monotone increasing. 

Let 
\begin{equation}
    \beta_i(\mathbf{X}_0, k):=\frac{ ke^{\alpha}}{g_i(\mathbf X_0,\boldsymbol{\theta}) + ke^{\alpha} } \sigma_v M_u.
\end{equation} We have established that $\|\mathbf{Y}_u^i\| \leq \beta_i(\mathbf{X}_0, k)$ for any control input $\mathbf{U}$ whose rows satisfy $\|\mathbf{U}^j\| \leq M_u$ for all $j \in \{1, \dots, k\}$. The same bound holds for $\|\mathbf{Y}_{u,\perp}^i\|$, the norm of the projection of $\mathbf{Y}_u^i$ onto the orthogonal complement of $\mathbf{Y}^*$.
\end{proof}

\subsection{Simplified reachability hypothesis}
\label{app:beta_to_gamma}
We can restate the hypothesis of our self-attention theorem, Theorem~\ref{thm:attention-control} \begin{equation}
    \|\mathbf{Y}^{\operatorname{max}}_{x,\perp}\| > \frac{ke^\alpha}{g_i}\sigma_v M_u
    \label{eq:equivalent}
\end{equation}
as equivalent to
\begin{equation}
\|\mathbf{Y}^{\operatorname{min},i}_{x,\perp}\| > \beta_i = \frac{ke^\alpha}{ke^\alpha + g_i} \sigma_v M_u.
\label{eq:simplified_Ymin_bd}
\end{equation}
Since 
\begin{equation}
\mathbf{Y}^{\operatorname{min},i}_x = \frac{g_i}{g_i + ke^\alpha}\mathbf{Y}^{\operatorname{max},i}_x,
\label{eq:simplified_Ymin_def}
\end{equation}
and
$g_i>0$ and $ke^\alpha$ are positive scalars, we can cancel the factor of $(ke^\alpha+g_i)^{-1}$ on both sides, and then divide both sides by $g_i$, to obtain the equivalent hypothesis \eqref{eq:equivalent} from hypothesis \eqref{eq:simplified_Ymin_bd}.

\subsection{More general theorem }

\label{app:more_general_theorem}
\begin{theorem}[Self-Attention Control Theorem 2]
\label{thm:attention-control-improved}
Consider a self-attention layer with input $\mathbf{X} \in \mathbb{R}^{m \times d}$ and control input $\mathbf{U} \in \mathbb{R}^{k \times d}$, where $m$ is the number of imposed tokens, $k$ is the number of control tokens, and $d$ is the token embedding dimension. Let $\mathbf{Y}^* \in \mathbb{R}^{m \times d}$ be the desired output, and let $\mathbf{Y} \in \mathbb{R}^{m \times d}$ be the actual output of the self-attention layer. As before, define $\mathbf{Y}_{\perp} = \mathbf{Y}_{x,\perp} + \mathbf{Y}_{u,\perp} \in \mathbb{R}^{m \times d}$ as the projection of the output onto the orthogonal complement of $\mathbf{Y}^*$.

If either:

(A) $\|\mathbf{Y}\| = \|\mathbf{Y}^*\|$ and there exists a component $\mathbf{Y}_{\perp}^{ij} \neq 0$ of the matrix $\mathbf{Y}_{\perp}$, %
or

(B) $\|\mathbf{Y}\| \neq \|\mathbf{Y}^*\|$,

then

\begin{equation}
\mathbf{Y} \neq \mathbf{Y}^*
\end{equation}

for any control input $\mathbf{U} \in \mathbb{R}^{k \times d}$ such that $\max_j \|\mathbf{U}^j\| \leq M_u$.

\end{theorem}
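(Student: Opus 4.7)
The plan is to prove the contrapositive in each case, leveraging the row-wise parallel/orthogonal decomposition sketched in Section~\ref{sec:theorem}. For each row index $i$ the row $(\mathbf{Y}^*)^i \in \mathbb{R}^d$ spans a one-dimensional subspace of $\mathbb{R}^d$; decomposing the corresponding row of $\mathbf{Y}$ relative to this line yields $\mathbf{Y}^i = \mathbf{Y}_\|^i + \mathbf{Y}_\perp^i$, and stacking these row decompositions gives the matrix $\mathbf{Y}_\perp \in \mathbb{R}^{m \times d}$ that appears in the statement. The components $\mathbf{Y}_{x,\perp}$ and $\mathbf{Y}_{u,\perp}$ arise analogously from the decomposition $\mathbf{Y} = \mathbf{Y}_x + \mathbf{Y}_u$ of equation~\eqref{eqn:y-decomp} and add linearly to give $\mathbf{Y}_\perp = \mathbf{Y}_{x,\perp} + \mathbf{Y}_{u,\perp}$.

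Case (B) is essentially immediate: if $\mathbf{Y} = \mathbf{Y}^*$, their Frobenius norms coincide, so by contrapositive $\|\mathbf{Y}\| \neq \|\mathbf{Y}^*\|$ forces $\mathbf{Y} \neq \mathbf{Y}^*$. This step invokes no properties of the self-attention map $\Xi$ and does not use the constraint $\max_j \|\mathbf{U}^j\| \leq M_u$; it is a pure consequence of the norm being a well-defined function of the matrix.

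For case (A) I would again argue by contrapositive. Suppose $\mathbf{Y} = \mathbf{Y}^*$; then for every index $i$ the row $\mathbf{Y}^i$ equals $(\mathbf{Y}^*)^i$ and therefore lies entirely in the span of $(\mathbf{Y}^*)^i$, so its projection onto the orthogonal complement vanishes. Hence $\mathbf{Y}_\perp^i = \mathbf{0}$ for every $i$, which in particular gives $\mathbf{Y}_\perp^{ij} = 0$ for every entry $(i,j)$. This contradicts the existence of a nonzero $\mathbf{Y}_\perp^{ij}$, so $\mathbf{Y} \neq \mathbf{Y}^*$.

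The only step requiring genuine care, and therefore the place I would slow down, is fixing the row-wise decomposition convention so that $\mathbf{Y}_\perp^{ij}$ is unambiguous as an entry of a matrix and behaves linearly under $\mathbf{Y} = \mathbf{Y}_x + \mathbf{Y}_u$. Once that bookkeeping is pinned down, both cases reduce to elementary linear-algebraic observations; in particular neither the analytic form of $\Xi$ nor the norm bound $M_u$ plays an active role. The theorem thus functions as a clean structural lemma formalising the reachability conditions invoked in Theorem~\ref{thm:attention-control} and the surrounding discussion, namely that reaching $\mathbf{Y}^*$ requires simultaneously matching its norm and nullifying every component orthogonal to it.
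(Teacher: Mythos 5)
Your proposal is correct and follows essentially the same route as the paper: the paper also argues by contradiction (equivalently, your contrapositive), disposing of case (B) by noting that equal matrices have equal norms and of case (A) by noting that $\mathbf{Y}=\mathbf{Y}^*$ forces every entry of $\mathbf{Y}_\perp$ to vanish, with neither the form of $\Xi$ nor the bound $M_u$ playing any active role. Your added care in fixing the row-wise projection convention is a minor tightening of bookkeeping the paper leaves implicit, not a different argument.
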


This theorem is also illustrated in Figure~\ref{fig:attention_figure} and is a more general theorem than Theorem~\ref{thm:attention-control}: the hypothesis of Theorem~\ref{thm:attention-control} implies that some row satisfies $\|\mathbf Y^{\operatorname{min},i}_{x,\perp}\| >  \|\mathbf Y^{\operatorname{max},i}_{u,\perp}\|$, so it must be the case that there exists some nonzero entry $\mathbf{Y}^{ij}_\perp$ of the matrix $\mathbf{Y}_\perp$ in the case that $\|\mathbf{Y}\| = \|\mathbf{Y}^*\|$. %

The Self-Attention Control Theorem (Theorem \ref{thm:attention-control}) provides valuable insights despite being less general than the more general version (Theorem \ref{thm:attention-control-improved}). An advantage of Theorem \ref{thm:attention-control} is its more specific hypothesis, Equation \eqref{eqn:attention-reachability-condition}, which provides a concrete criterion for determining whether the desired output can be achieved by the self-attention layer\footnote{and depends on the properties of the input tokens, the control tokens, and the learned parameters of the self-attention layer, such as the maximum singular values of the query and key projection matrices}. 

\begin{proof}[Proof of Theorem~\ref{thm:attention-control-improved}]
We will prove the theorem by contradiction. Assume that $\mathbf{Y} = \mathbf{Y}^*$ for some control input $\mathbf{U}$ satisfying $\max_j \|\mathbf{U}^j\| \leq M_u$. 

Case (A): If $\|\mathbf{Y}\| = \|\mathbf{Y}^*\|$ and there exists a component $\mathbf{Y}_{\perp}^{ij} \neq 0$ of the matrix $\mathbf{Y}_{\perp}$, then $\mathbf{Y}_\perp \neq \mathbf{0}$. This implies that $\mathbf{Y}$ and $\mathbf{Y}^*$ are not parallel, and therefore $\mathbf{Y} \neq \mathbf{Y}^*$, contradicting the assumption.

Case (B): If $\|\mathbf{Y}\| \neq \|\mathbf{Y}^*\|$, then $\mathbf{Y} \neq \mathbf{Y}^*$ directly, again contradicting the assumption.

In both cases, we have a contradiction, so the assumption that $\mathbf{Y} = \mathbf{Y}^*$ must be false, and we can conclude that $\mathbf{Y} \neq \mathbf{Y}^*$ for any control input $\mathbf{U}$ satisfying $\max_j \|\mathbf{U}^j\| \leq M_u$.

To show that Theorem~\ref{thm:attention-control} is a special case of the more general theorem, consider the hypothesis of Theorem~\ref{thm:attention-control}: from \eqref{eqn:attention-reachability-condition} we conclude that some row satisfies $\|\mathbf{Y}^{min,i}_{x,\perp}\| > \|\mathbf{Y}^{max,i}_{u,\perp}\|$. This implies that $\mathbf{Y}_{x,\perp}^{ij} \neq -\mathbf{Y}_{u,\perp}^{ij}$ for some entry $(i,j)$, and therefore $\mathbf{Y}_{\perp}^{ij} = \mathbf{Y}_{x,\perp}^{ij} + \mathbf{Y}_{u,\perp}^{ij} \neq 0$. This satisfies the condition of case (A) in the more general theorem, assuming $\|\mathbf{Y}\| = \|\mathbf{Y}^*\|$. Thus, the hypothesis of Theorem~\ref{thm:attention-control} is a special case of the hypothesis in the more general theorem.
\end{proof}

By incorporating this bound into the hypothesis, Theorem \ref{thm:attention-control} offers a more practical and actionable result, allowing researchers and practitioners to assess the controllability of a self-attention layer based on measurable quantities, without the need to exhaustively search the space of possible control inputs. Moreover, the presence of the bound opens up opportunities for further analysis and optimization, potentially guiding the design of control strategies that satisfy the bound and ensuring that the desired output can be reached. Additionally, the bound can be used to derive insights into the relationship between the properties of the input tokens, the control tokens, and the achievable control over the self-attention layer's output.
While Theorem \ref{thm:attention-control-improved} provides a more general result, Theorem \ref{thm:attention-control} complements it by incorporating a specific bound involving $\gamma_i$ into its hypothesis. This specific bound in Theorem \ref{thm:attention-control}'s makes it more practical for control of self-attention layers in applications.

\subsection{Discussion}

As seen in equation \eqref{eq:beta_defn}, $\beta_i(\mathbf{X}_0, k)$ exhibits a hyperbolic dependence on $ke^\alpha$. This suggests that increasing the number of control tokens can ``dominate'' the output of the self-attention, overwhelming the influence of the imposed state sequence $\mathbf X_0$. 
The theorem's reachability condition depends on the number of control tokens $k$ through the threshold $\gamma_i(\mathbf{X}_0, \boldsymbol{\theta})$. As discussed in Remark~\ref{rem:convergence}, the threshold scales linearly with $k$, suggesting that increasing the number of control tokens can potentially enhance controllability. However, this effect is modulated by the other terms in the threshold, such as $\alpha$ and $g_i(\mathbf{X}_0,\boldsymbol{\theta})$, which depend on the properties of the imposed tokens and the model parameters. 
Specifically, $\beta_i$ saturates to 1 as $k\to \infty$ or as $\alpha$ becomes very large.

The term $g_i(\mathbf X_0,\boldsymbol{\theta})$ captures the influence of the imposed tokens on the attention weights and appears in the denominator of the threshold $\gamma_i(\mathbf{X}_0, \boldsymbol{\theta})$. Larger values of $g_i(\mathbf{X}_0,\boldsymbol{\theta})$ lead to a lower threshold, which may make reachability easier, thus increasing the potentially reachable set size. \\

The hypothesis of Theorem~\ref{thm:attention-control} implies that some row of the projection of the minimum possible output $\mathbf{Y}_x^{{\rm min}}$ onto the orthogonal complement of $\mathbf{Y}^*$ exceeds the corresponding row of the maximum possible projection of $\mathbf{Y}_u$. This ensures the existence of a non-zero component in $\mathbf{Y}_\perp$ and precludes reachability. Thus, Theorem~\ref{thm:attention-control} provides a more specific, practically applicable criterion for assessing controllability than Theorem~\ref{thm:attention-control-improved}.

Theorem~\ref{thm:attention-control}'s reachability condition depends on the maximum singular values of the query, key, and value projection matrices ($\mathbf{W}_q, \mathbf{W}_{\rm key}, \mathbf{W}_v$). %
The $\alpha$ term in Theorem~\ref{thm:attention-control}, which involves the maximum singular values of the query and key projection matrices, provides an upper bound on the scaled dot products that is only tight in the special case of maximal alignment between the query and key matrices. In general, the actual size of the threshold $\gamma_i$ will be smaller depending on $g_i$ and the alignment of queries from $\mathbf X_0$ with the keys from $\mathbf U$ and $\mathbf X_0$. 
The distribution of the singular values will also heavily impact the tightness of the bound: if all singular values are the same (i.e., $\mathbf W_q$, $\mathbf W_k$ are each orthogonal matrices), the bound will be tight. If there are a few very large singular values and many small ones, the bound is loose.
Therefore, the reachability condition in the theorem can be overly optimistic when used as a test for reachability, though it remains a sufficient condition for \textit{un}reachability.

Theorem~\ref{thm:attention-control} and Theorem~\ref{thm:attention-control-improved} focus exclusively on the self-attention mechanism and do not directly address the impact of activation functions and other non-linearities present in the full transformer architecture on the controllability of the final model outputs. In a typical transformer block, the output of the self-attention layer passes through a non-linear activation function, such as ReLU or GELU, before being combined with the residual connection and proceeding to the next layer. These non-linearities can affect the propagation of signals through the network and, consequently, the controllability of the end-to-end model. 

Analyzing controllability in the presence of multiple layers with interleaved non-linearities is an open problem. Investigating this challenge through the lens of, for instance, non-linear control theory has the potential to guide the design of transformer models with enhanced steerability and interpretability, which may advance the frontier of controllable and explainable AI systems. This direction has the potential to advance our understanding of the complex dynamics of large language models and develop principled approaches to controlling their behavior. However, significant research is still needed to realize this goal.

\section{Prompt Optimization Algorithms}
\label{sec:algs}

\paragraph{Greedy Back-Generation: } While testing all prompts in $\mathcal V^k$ is intractable for $k >1$, it takes only $|\mathcal V|$ forward passes of the network to compute the loss on $y$ induced by all possible \textit{single token} prompts $u \in \mathcal V$. Our Greedy Back Generation algorithm leverages this fact to generate prompts $u\in \mathcal V^k$ one token at a time, working backward sampling the $i$th greedy-optimal single token extension $u' = \arg\max_{u'} P_{LM}(y|u'+u+x)$ of the current prompt $u\in \mathcal V^{i-1}$. 

\begin{algorithm}[tb]
\caption{Greedy Token-Wise Prompt Generation}
\label{alg:greedy}
\begin{algorithmic}[1]
   \REQUIRE A causal LLM $P_{LM}$ with vocabulary $\mathcal V$, a set of base tokens $x \in \mathcal V^n$, a desired final token $y\in \mathcal V$, and a desired number of prompt tokens $k$.
   \ENSURE \textit{Magic words} $u^*$ of length $k$.
   \STATE Initialize $u^*$ to be empty.
   \FOR{$i=1$ {\bfseries to} $k$}
      \FORALL{$u' \in \mathcal V$}
         \STATE compute $P_{LM} (y | u'+u^*+x)$
      \ENDFOR
      \STATE Select the $u'$ that maximizes the probability of $y$ given $u' + u^* + x$. Prepend $u'$ to $u^*$
   \ENDFOR
   \STATE \textbf{return} $u^*$
\end{algorithmic}
\end{algorithm}
This method is optimal for $k=1$ prompt token $u^{*}\in \mathcal V$ and generally outperforms GCG for short prompts of length $k\leq 3$. Computing 1 additional prompt token takes roughly 1-4 minute when using an NVIDIA A100-80GB GPU with a 7 billion parameter model and 5-20 minutes on 2 NVIDIA A100-80GB GPUs with a 40 billion parameter model.

\paragraph{Greedy Coordinate Gradient (GCG): } The Greedy Coordinate Gradient algorithm, presented by \citep{zou2023universal} building off the work of \citep{shin2020autoprompt}, is the state-of-the-art method for optimizing prompts. Starting with a random prompt of length $k$, the algorithm generates a batch of alternative prompts. Each member of the batch swaps a random token in the current prompt with a promising alternate token. The value metric for a swap is given by a first order approximation of the change in loss $\mathcal L = \text{CELoss}(y, P_{LM}(y | u + x))$ with the embedding of each token in $u$. 

\begin{algorithm}[tb]
\caption{Greedy Coordinate Gradient}
\label{alg:gcg}
\begin{algorithmic}[1]
   \REQUIRE A causal LLM \( P_{LM} \) that accepts token strings from a vocabulary \( \mathcal X \), an embedding dictionary \( \mathbf{e} \), embeddings \( \mathbf{e}^{*}_i \) corresponding to each token \( i \) of \( u^* \), a set of base tokens \( x_{1:n} \), a desired number of prompt tokens \( k \), iterations \( T \), \( k_{sub} \), and batch size \( B \).
   \ENSURE \textit{Magic words} \( u^* \) of length \( k \).
   \STATE Initialize \( u^* \) to be random tokens from vocabulary.
   \FOR{\( iteration = 1 \) \textbf{to} \( T \)}
      \FOR{\( i = 1 \) \textbf{to} \( k \)}
         \STATE \( \mathcal X_i = \) Top-\( k_{sub} \) (\( \mathbf{e}^\top \nabla_{\mathbf{e}^{*}_i} P_{LM}(x_n | u^* + x_{1:n-1}) \)) \label{line:cand_calc}
      \ENDFOR
      \FOR{\( b = 1 \) \textbf{to} \( B \)} \label{line:for_batch}
         \STATE \( i = \) randint(\([1, \dots, k]\))
         \STATE \( j = \) randint(\([1, \dots, k_{sub}]\))
         \STATE \( \tilde{u}^{*}_b [i] = \mathcal X_i [j] \) 
      \ENDFOR
      \STATE \( u^{*} = \tilde{u}^{*}_{b^\ast} \), where \( b^\ast = \) argmax\(_b (P_{LM}(x_n | u^* + x_{1:n-1}))\)
   \ENDFOR
   \STATE \textbf{return} \( u^* \)
\end{algorithmic}
\end{algorithm}

This method outperforms all other methods we tested for prompts of length $k>3$. We use a batch size $B=768$, sampled from the top $k_{sub}=128$ token replacements at each index, and iterate for $T=34$ iterations. 
For each instance, this optimization took roughly 2 minutes for the 7 billion parameter models on a single A100-80GB GPU and 4-8 minutes for the 40 billion parameter model on 4 A100-80GB GPU.

\section{Supplementary Figures: Optimal Control Prompts}
\label{sec:sup_figs}

\subsection{``Ground Truth'' Controllability Results}
\label{sec:sup_main}
This subsection includes supplementary figures for the controllability of Llama-7b, Falcon-7b, and Falcon-40b ``ground truth'' target outputs from Wikitext. For each initial state sequence $\mathbf x_0$, the target output $y$ is the token immediately following $\mathbf x_0$ in Wikitext. 
We measured the $k$-$\epsilon$ controllability of each of the 7 billion parameter models with a dataset of 5000 state-output pairs while we used a dataset of 500 state-output pairs for Falcon-40b. 

Figure~\ref{fig:log_main} shows each model's log-spaced $k$-$\epsilon$ curves on the Wikitext dataset, revealing a log-linear relationship between maximum prompt length $k$ and the fraction of uncontrollable initial state-target output pairs $(\mathbf x_0, y)$. 
We visualize the relationship between prompt length and the prior cross-entropy loss of each LLM on predicting the target output $y$ given the state sequence $\mathbf x_0$ (i.e., $-\log P_{LM}(y | \mathbf x_0)$ in Figure~\ref{fig:base_loss_k_main} where we find it difficult to predict the required prompt length from the base loss. 

Finally, Figure~\ref{fig:main_freqs} shows a histogram of the tokens in the optimized prompts generated in the ground truth $k$-$\epsilon$ controllability experiments on Wikitext. 

\begin{figure}[ht]
    \centering
    \begin{minipage}[b]{0.48\textwidth}
        \includegraphics[width=\textwidth]{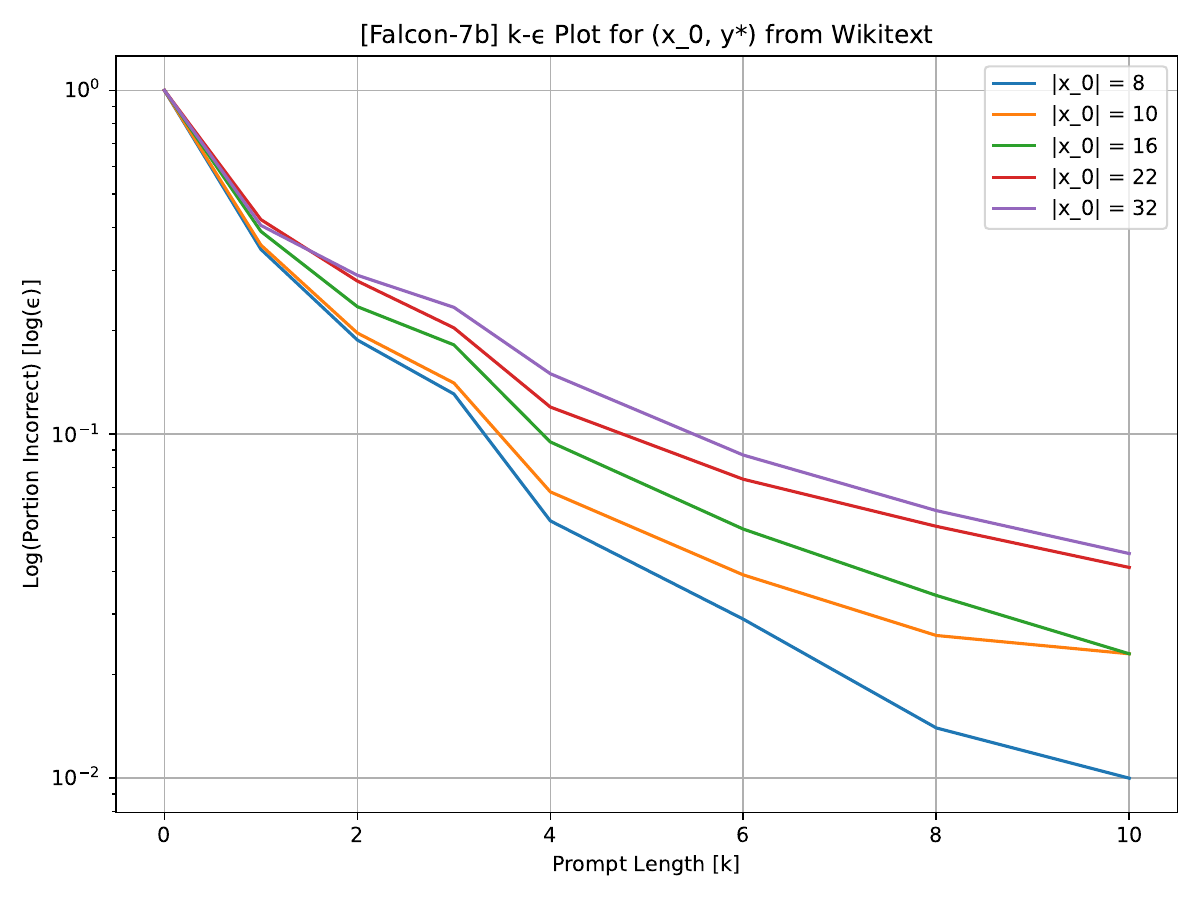}
    \end{minipage}
    \hfill
    \begin{minipage}[b]{0.48\textwidth}
        \includegraphics[width=\textwidth]{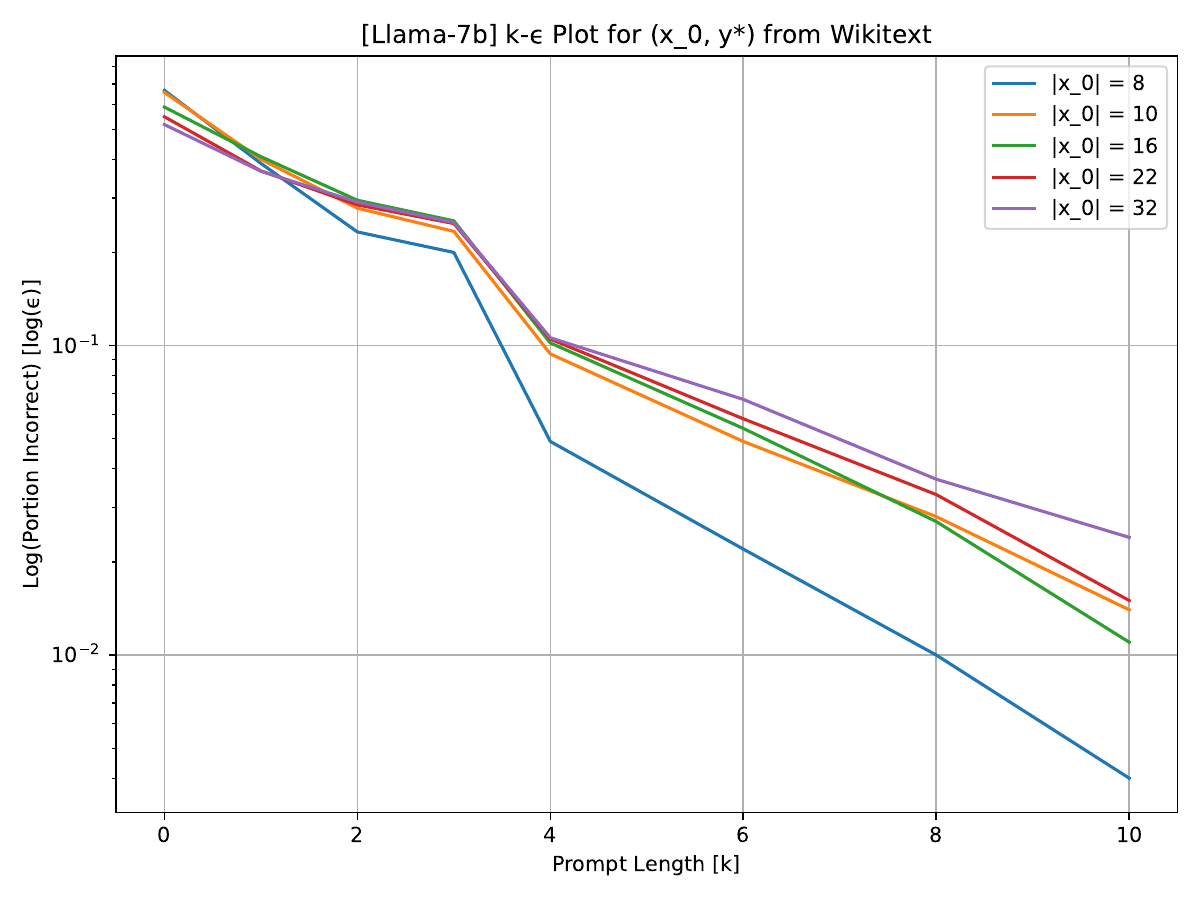}
    \end{minipage}
    \vspace{1em} %
    \begin{minipage}[b]{0.48\textwidth}
        \caption{
            Log spaced main results of $k$-$\log(\epsilon)$ controllability. Interestingly, the relationship between $k$ and $\log(\epsilon)$ appears roughly linear for each question length in the regime studied. \\
            \textbf{Top left}: $k$-$\log(\epsilon)$ values for Falcon-7b. With $k=10$ control tokens, 97.16\% of the target outputs were reachable. \\
            \textbf{Top right}: $k$-$\log(\epsilon)$ values for Llama-7b. With $k=10$ control tokens, 98.64\% of the target outputs were reachable. \\
            \textbf{Bottom right}: $k$-$\log(\epsilon)$ values for Falcon-40b. With $k=10$ control tokens, 97.00\% of the target outputs were reachable.
            \label{fig:log_main}
        }
    \end{minipage}
    \hfill
    \begin{minipage}[b]{0.48\textwidth}
        \includegraphics[width=\textwidth]{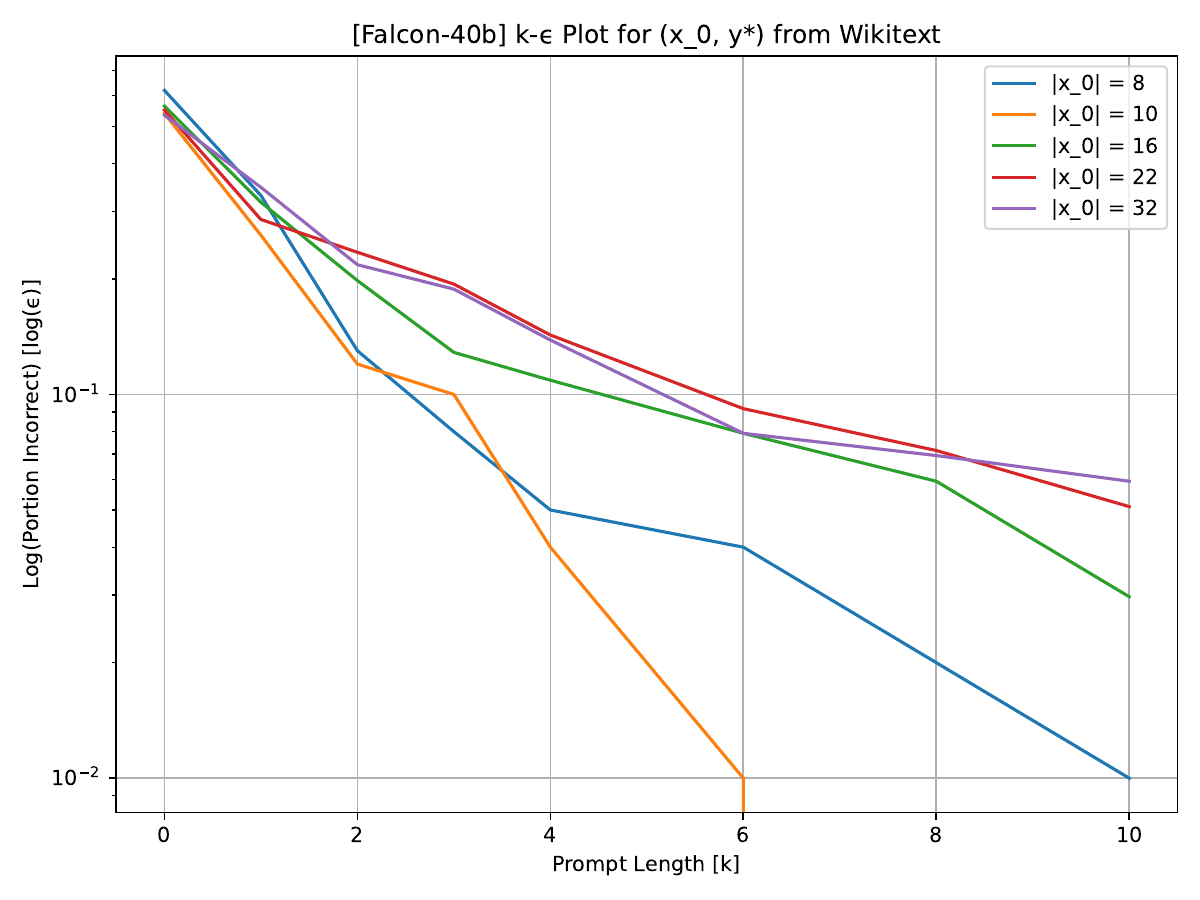}
    \end{minipage}
\end{figure}

\begin{figure}[ht]
    \centering
    \begin{minipage}[b]{0.48\textwidth}
        \includegraphics[width=\textwidth]{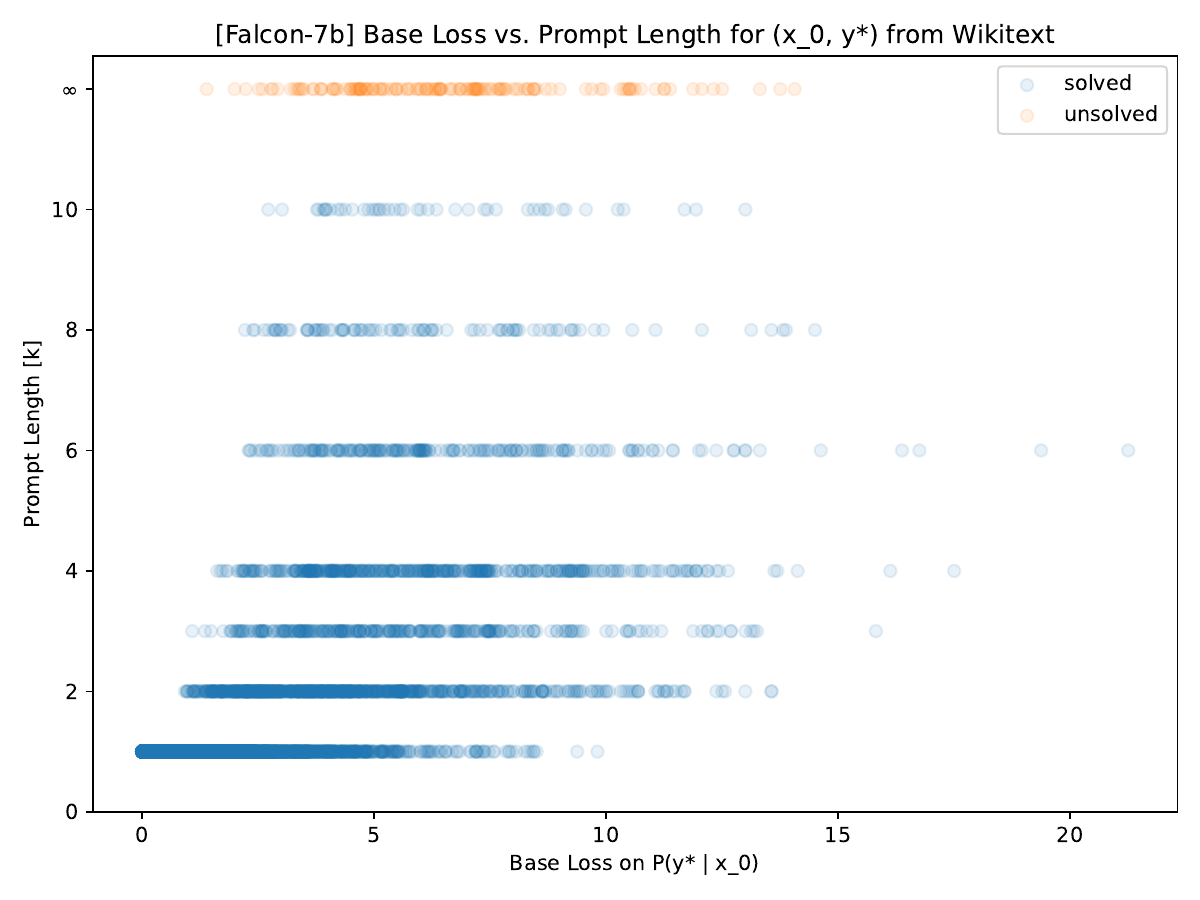}
    \end{minipage}
    \hfill
    \begin{minipage}[b]{0.48\textwidth}
        \includegraphics[width=\textwidth]{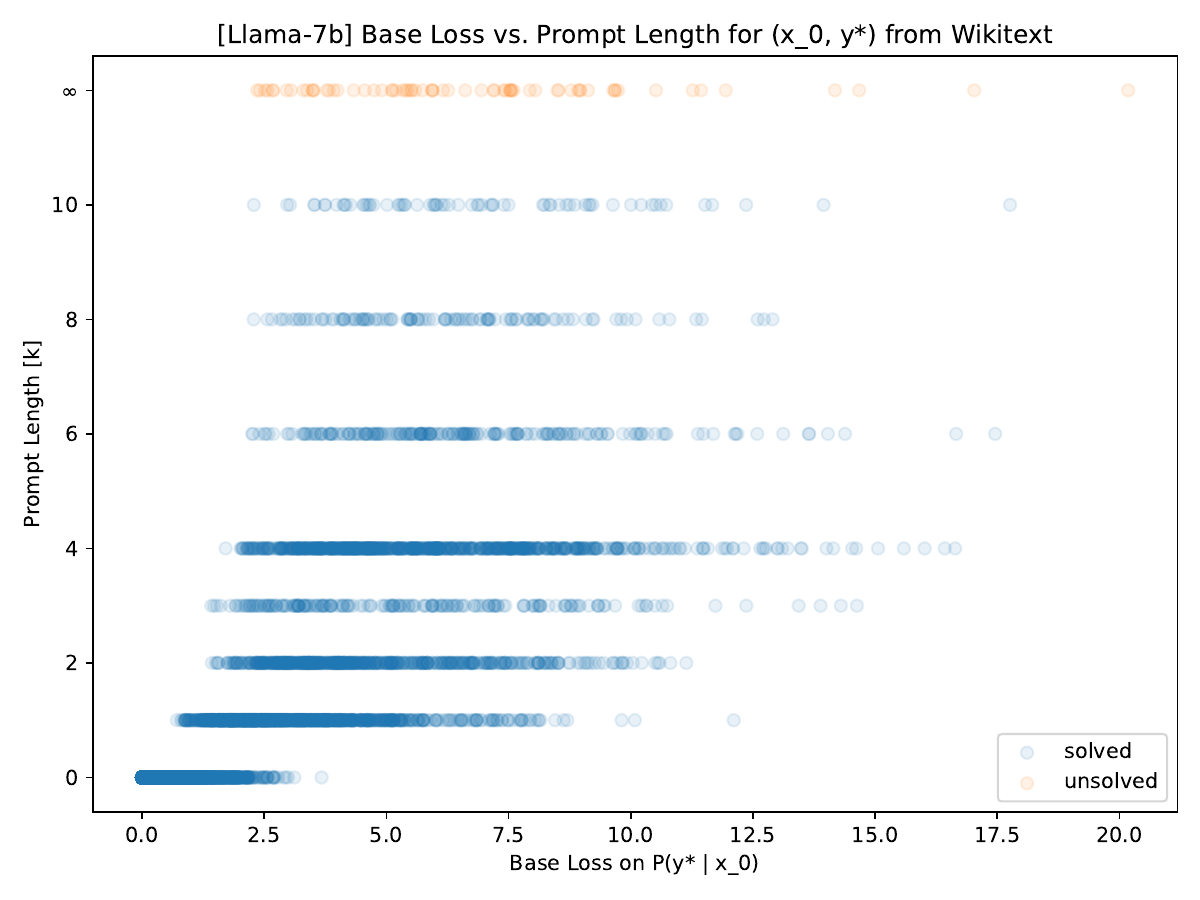}
    \end{minipage}
    
    \vspace{1em} %

    \begin{minipage}[b]{0.48\textwidth}
        \caption{
            Required prompt length $k$ versus base loss on the target output $\mathcal L = -\log P_{LM}(y | \mathbf x_0)$ on ``ground truth'' wikitext target outputs $y$ directly proceeding $\mathbf x_0$.
            \textbf{Top left}: Falcon-7b. 
            \textbf{Top right}: Llama-7b. 
            \textbf{Bottom right}: Falcon-40b.
            While there does appear to be an ``exclusion zone'' in the top left-hand corner where a high prompt length is never associated with a base loss below a given threshold, base loss appears to be a poor predictor of required prompt length.
            \label{fig:base_loss_k_main}
        }
    \end{minipage}
    \hfill
    \begin{minipage}[b]{0.48\textwidth}
        \includegraphics[width=\textwidth]{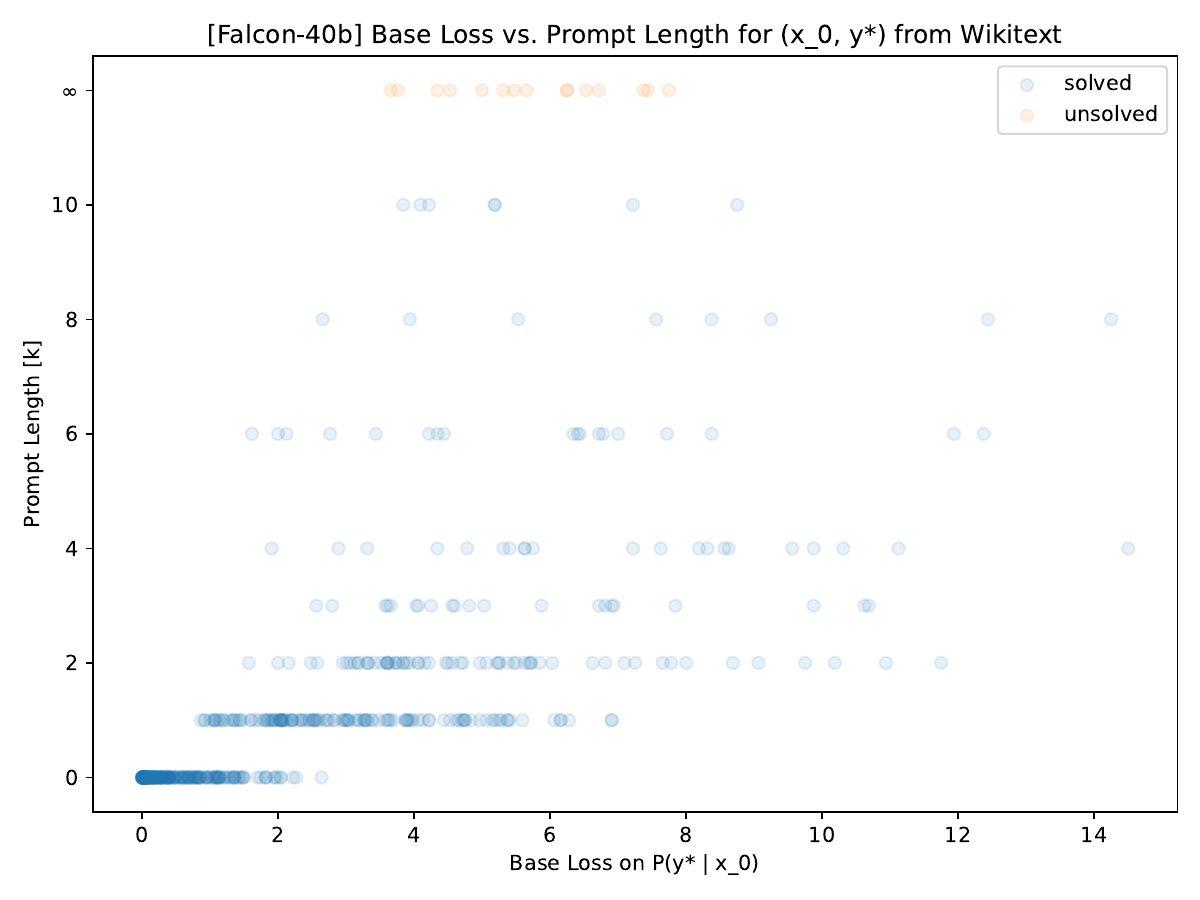}
    \end{minipage}
\end{figure}

\begin{figure}[htbp]
    \centering
    \subfigure[Falcon-7b]{
        \includegraphics[width=0.9\textwidth]{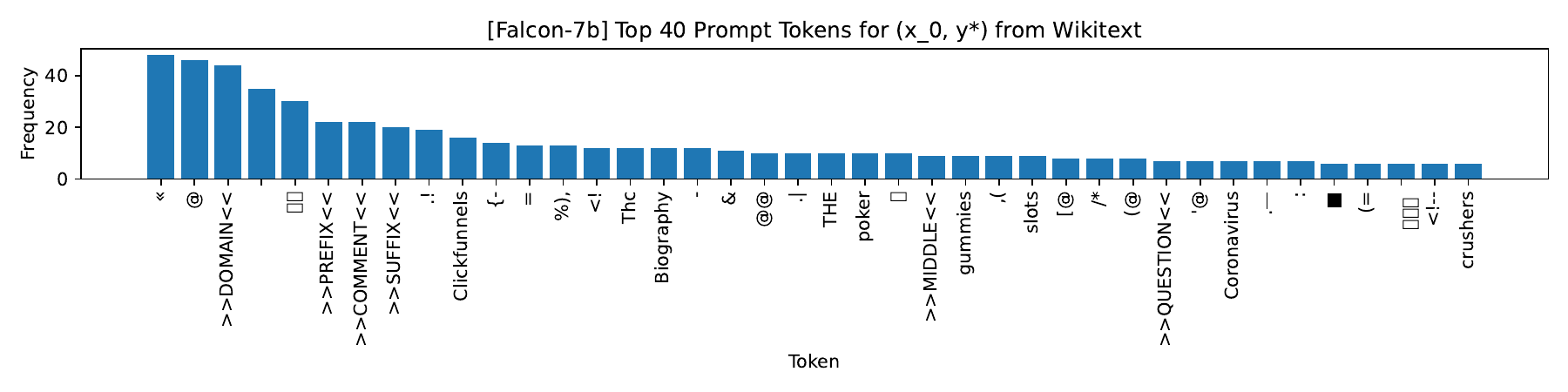}
    }
    \subfigure[Llama-7b]{
        \includegraphics[width=0.9\textwidth]{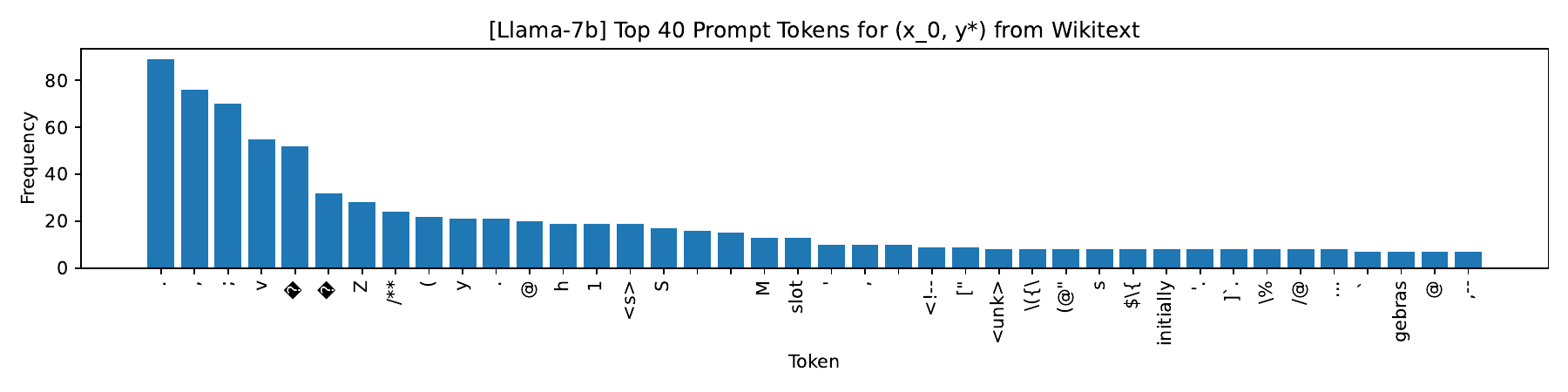}
    }
    \subfigure[Falcon-40b]{
        \includegraphics[width=0.9\textwidth]{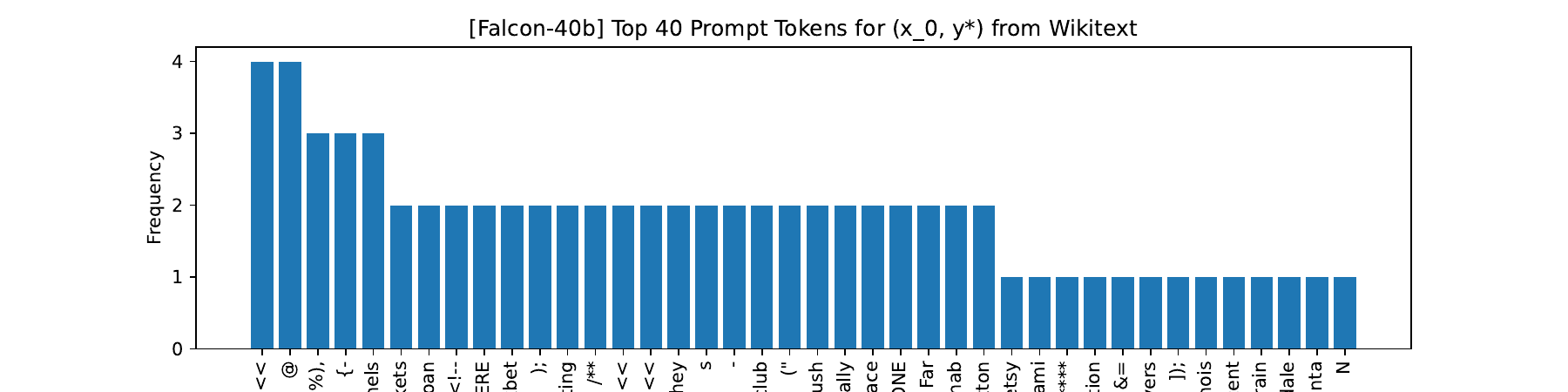}
    }
    \caption{Prompt token frequencies for Falcon-7b (top), Llama-7b (middle), and Falcon-40b (bottom) from Wikitext ground truth target token $k$-$\epsilon$ controllability experiments.}
    \label{fig:main_freqs}
\end{figure}

\subsection{Top-75 Wikitext Controllability Results} 
\label{sec:sup_75}
This subsection includes supplementary figures for the controllability of Llama-7b, Falcon-7b, and Falcon-40b on the Wikitext dataset where the target output token $y$ for a given initial state token sequence $\mathbf x_0$ is sampled uniformly from the top 75 highest-probability tokens as determined by the language model itself $P_{LM}(y | \mathbf x_0)$. 
Specifically, the dataset $\mathcal D$ consists of 25 unique initial state token sequences $\mathbf x_0$ sampled from Wikitext, each replicated 75 times for the top 75 most probable subsequent tokens $y \sim P(y | \mathbf x_0)$. 
This procedure yielded a dataset of 1875 initial state-target output pairs $(\mathbf x_0, y)$ for the 7 billion parameter models.
Due to the computational requirements for the 40 billion parameter model, the number of unique initial state token sequences was decreased to 10, resulting in a dataset of 750 initial state-target output pairs. 
The $k$-$\epsilon$ plots for each model are shown in Figure~\ref{fig:k_eps_75}. On average, across the 3 models, the top 75 outputs were reachable 86.865\% of the time with $k\leq 10$ prompt tokens. 
Similar log-linear trends were observed in the $k$-$\epsilon$ plot.
Figure~\ref{fig:base_loss_k_75} shows the relationship between base loss and required prompt length, revealing a more dramatic ``exclusion zone'' in the top left, similar to main ``ground truth'' results in Figure~\ref{fig:base_loss_k_main}.
Finally, Figure~\ref{fig:75_freqs} plots a histogram of the 40 most common tokens observed in the optimized control input prompts from the top-75 experiments.

\begin{figure}[ht]
    \centering
    \begin{minipage}[b]{0.48\textwidth}
        \includegraphics[width=\textwidth]{figs/shallow1_falcon7b_k_epsilon.pdf}
    \end{minipage}
    \hfill
    \begin{minipage}[b]{0.48\textwidth}
        \includegraphics[width=\textwidth]{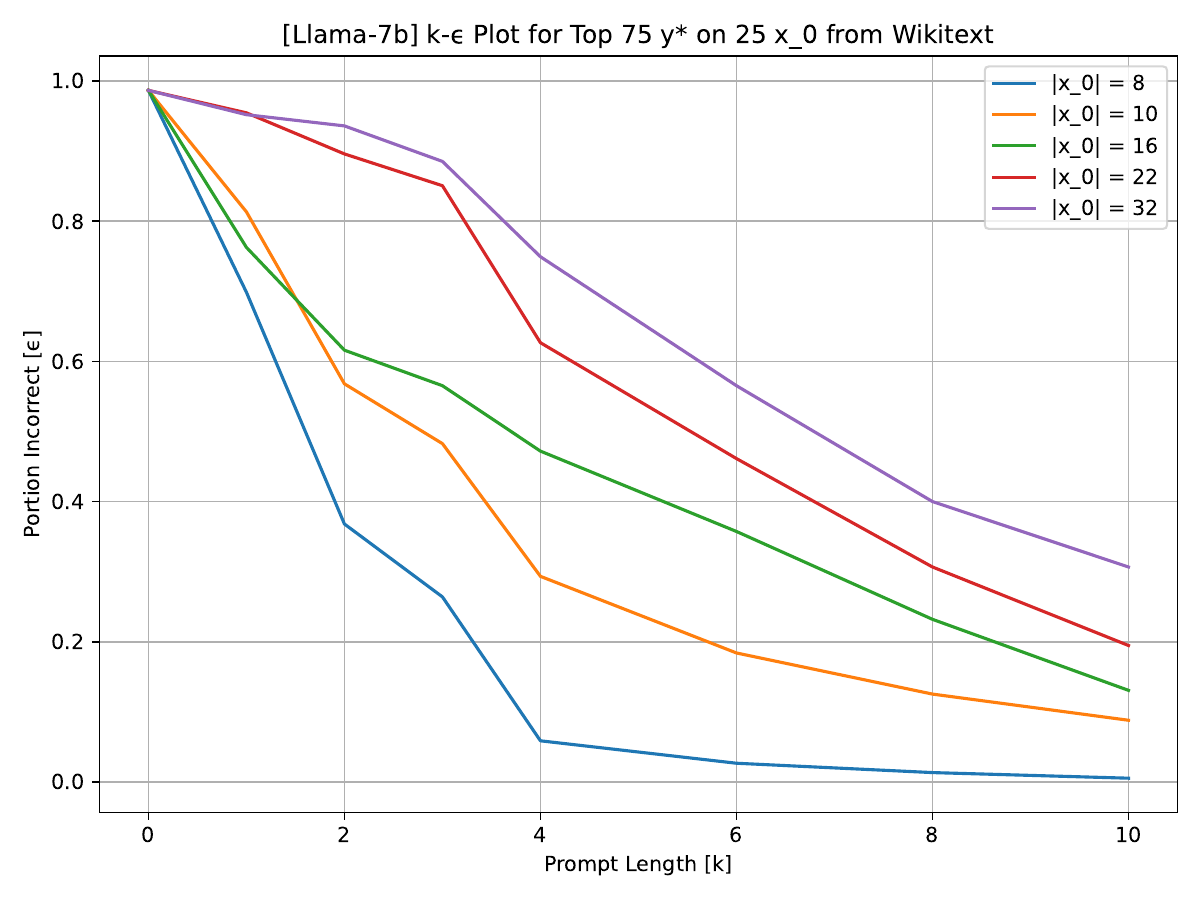}
    \end{minipage}
    \vspace{1em} %
    \begin{minipage}[b]{0.48\textwidth}
        \caption{
            $k$-$\epsilon$ controllability plots on the top 75 most likely output tokens.  \\
            \textbf{Top left}: $k$-$\epsilon$ values for Falcon-7b. With $k=10$ control tokens, 89.387\% of the top 75 output tokens were reachable. \\
            \textbf{Top right}: $k$-$\epsilon$ values for Llama-7b. With $k=10$ control tokens, 85.493\% of the top 75 output tokens were reachable. \\
            \textbf{Bottom right}: $k$-$\epsilon$ values for Falcon-40b. With $k=10$ control tokens, 85.714\% of the top 75 output tokens were reachable.
            \label{fig:k_eps_75}
        }
    \end{minipage}
    \hfill
    \begin{minipage}[b]{0.48\textwidth}
        \includegraphics[width=\textwidth]{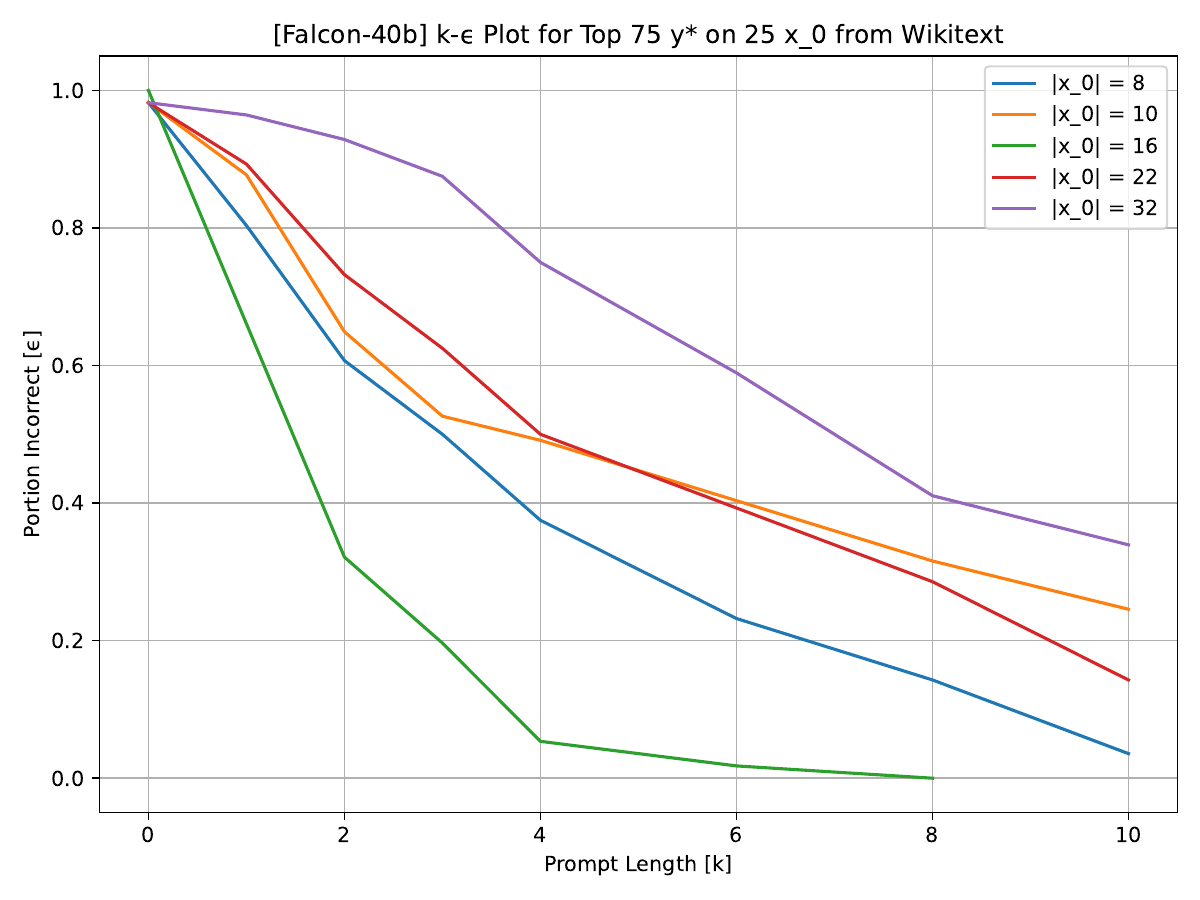}
    \end{minipage}
\end{figure}

\begin{figure}[ht]
    \centering
    \begin{minipage}[b]{0.48\textwidth}
        \includegraphics[width=\textwidth]{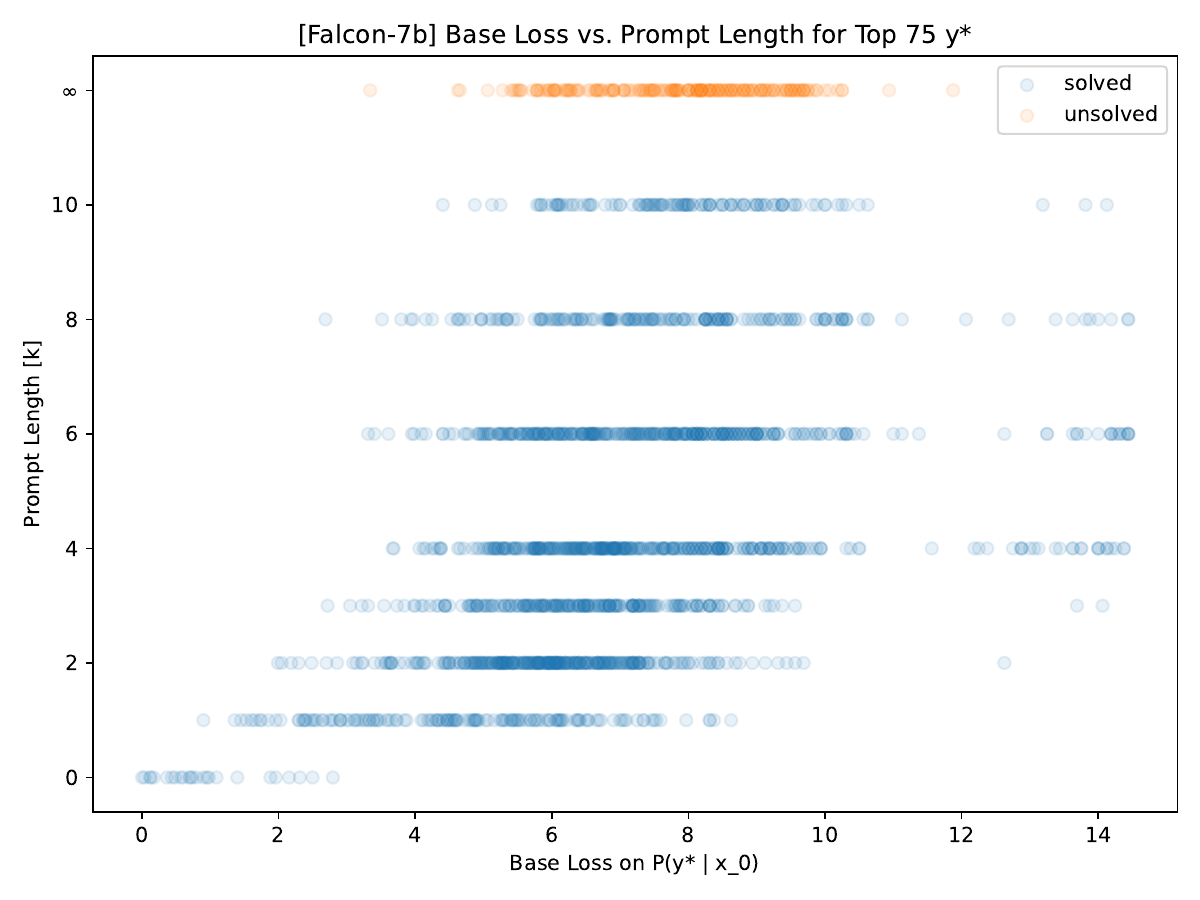}
    \end{minipage}
    \hfill
    \begin{minipage}[b]{0.48\textwidth}
        \includegraphics[width=\textwidth]{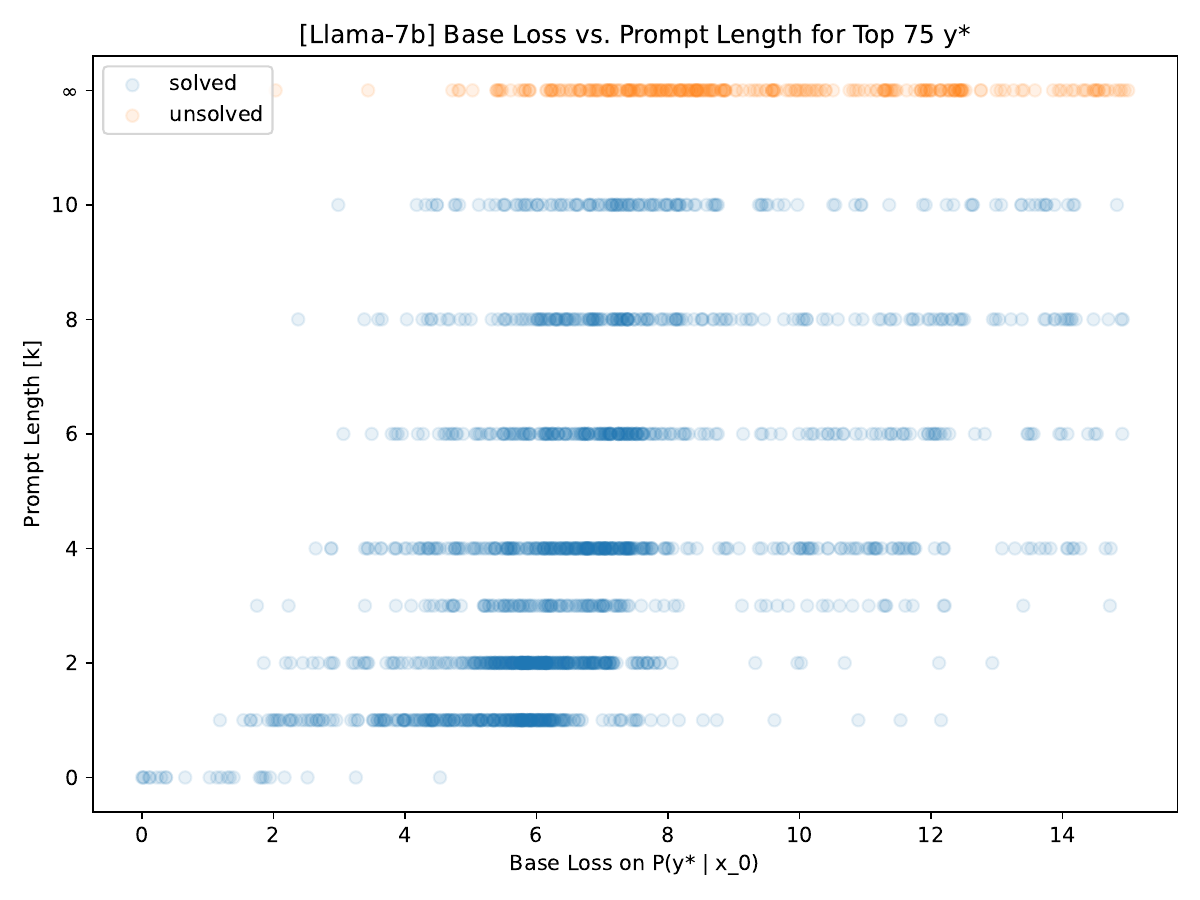}
    \end{minipage}
    
    \vspace{1em} %

    \begin{minipage}[b]{0.48\textwidth}
        \caption{
            Required prompt length $k$ versus base loss on the target output $\mathcal L = -\log P_{LM}(y | \mathbf x_0)$ on synthetic top-75 dataset.
            \textbf{Top left}: Falcon-7b. 
            \textbf{Top right}: Llama-7b. 
            \textbf{Bottom right}: Falcon-40b.
            While there does appear to be an ``exclusion zone'' in the top left-hand corner where a high prompt length is never associated with a base loss below a given threshold, base loss appears to be a poor predictor of required prompt length.
            \label{fig:base_loss_k_75}
        }
    \end{minipage}
    \hfill
    \begin{minipage}[b]{0.48\textwidth}
        \includegraphics[width=\textwidth]{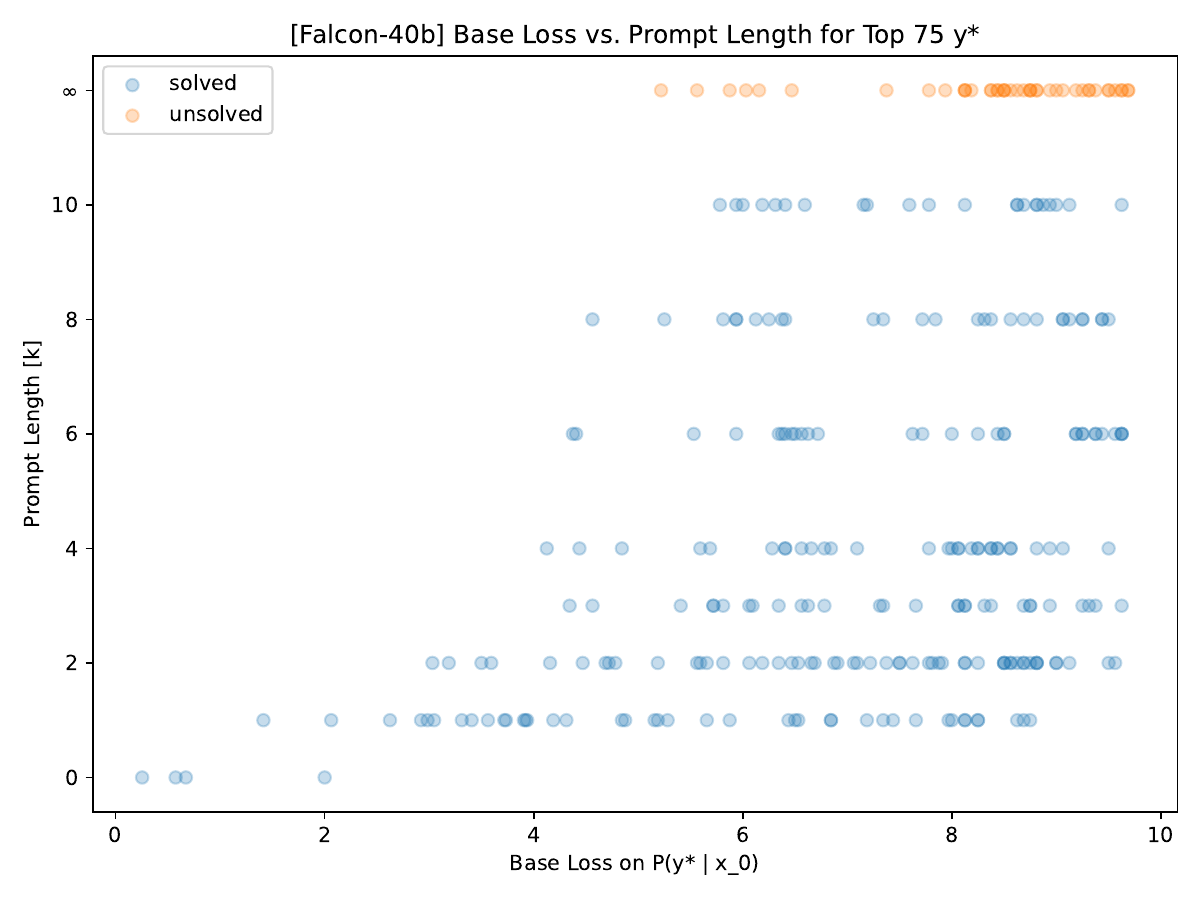}
    \end{minipage}
\end{figure}

\begin{figure}[htbp]
    \centering
    \subfigure[Falcon-7b]{
        \includegraphics[width=0.9\textwidth]{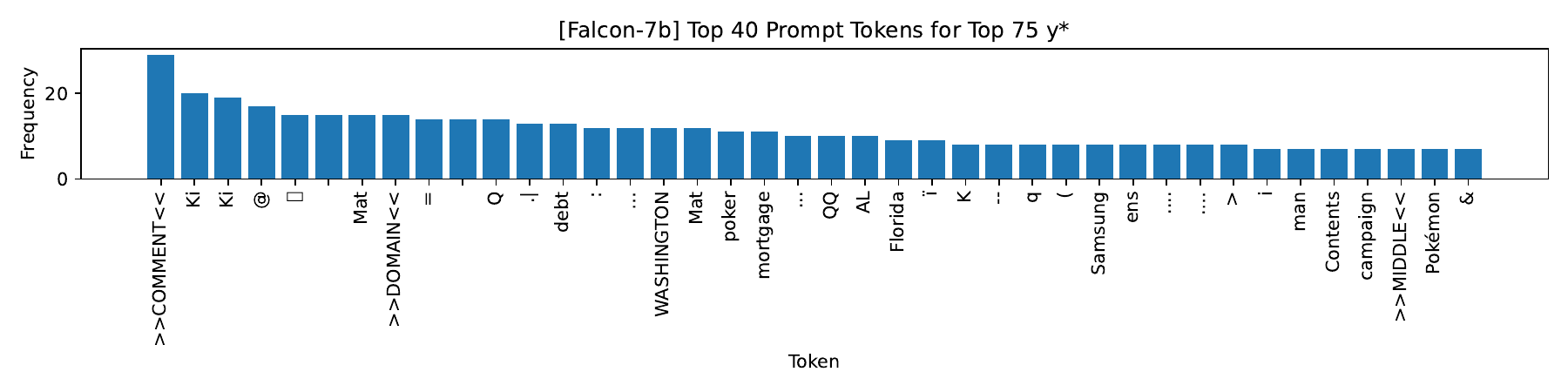}
    }
    \subfigure[Llama-7b]{
        \includegraphics[width=0.9\textwidth]{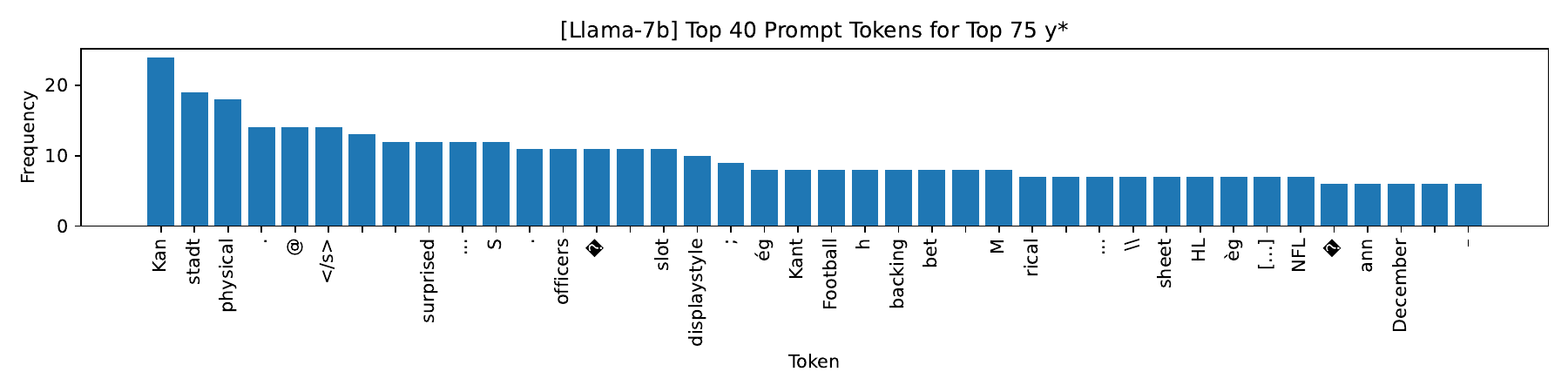}
    }
    \subfigure[Falcon-40b]{
        \includegraphics[width=0.9\textwidth]{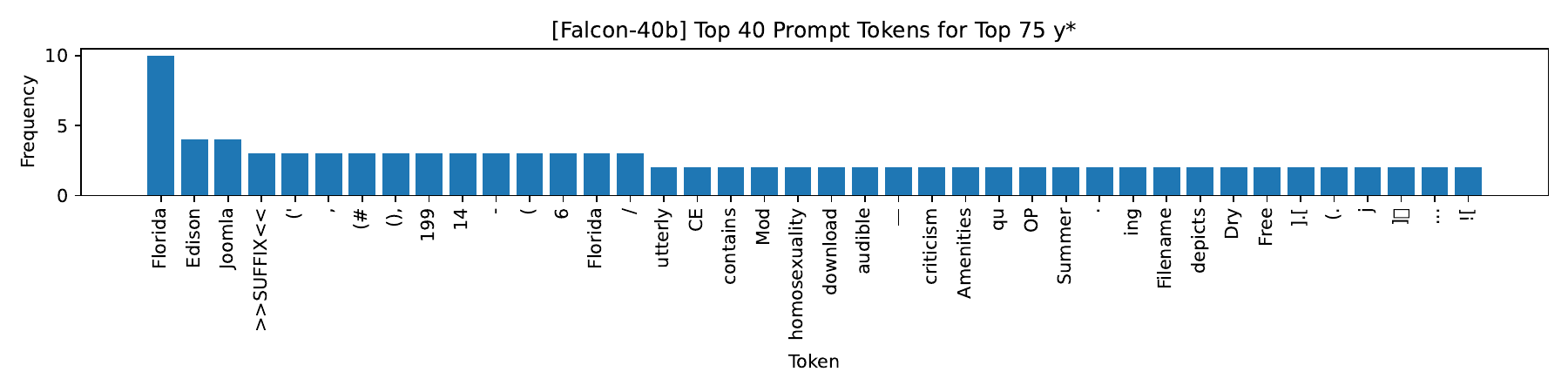}
    }
    \caption{Prompt token frequencies for Falcon-7b (top), Llama-7b (middle), and Falcon-40b (bottom) from Wikitext top-75 synthetic dataset $k$-$\epsilon$ controllability experiments.}
    \label{fig:75_freqs}
\end{figure}

\subsection{Uniformly Sampled Output Token Results}
\label{sec:sup_deep}

This section contains supplementary figures for $k$-$\epsilon$ controllability experiments on a synthetic dataset $\mathcal D = \{(\mathbf x_0, y)\}$ where $\mathbf x_0$ are sampled from the Wikitext dataset and $y$ is sampled uniformly from the vocabulary. 
The uniform target output dataset $\mathcal D$ consists of 616 state-output pairs. 
Due to computational constraints, $k$-$\epsilon$ controllability was only measured for Falcon-7b. 
Overall, only 46.42\% of the target outputs were reachable with $k=10$ prompt tokens. 
Figure~\ref{fig:deep_main} visualizes the $k$-$\epsilon$ results, the relationship between base loss and prompt length, and the most frequently observed tokens in the optimized control prompts. 
While the ``exclusion zone'' behavior (cf Figures~\ref{fig:base_loss_k_75}, \ref{fig:base_loss_k_main}) is observed in the base loss vs. prompt length subplot, base loss remains a poor predictor of required prompt length. 
Moreover, Figure~\ref{fig:deep_falcon_rank_k} reveals an even more uniform relationship between the initial rank of the target output token and the required prompt length. 

\begin{figure}[htbp]
    \centering
    \includegraphics[width=0.48\linewidth]{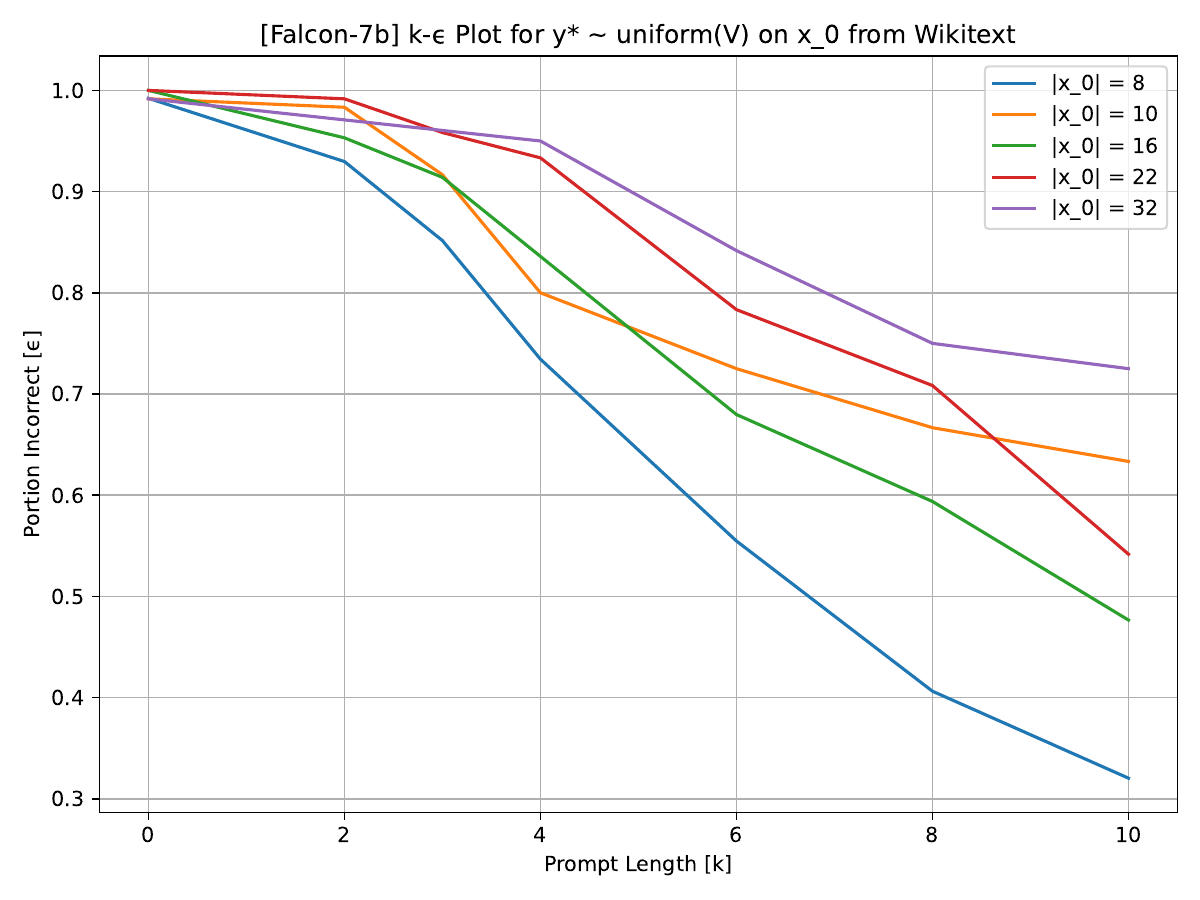}
    \hfill
    \includegraphics[width=0.48\linewidth]{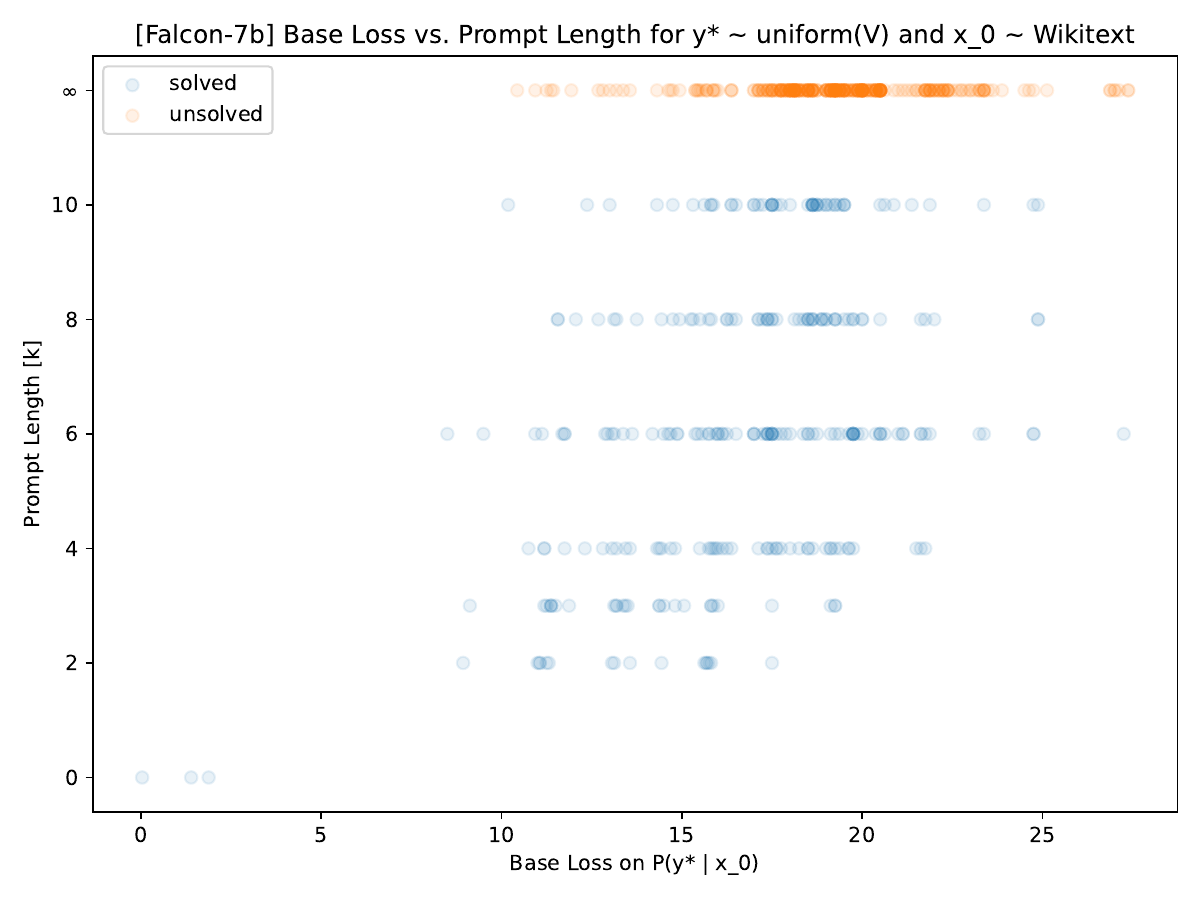}
    \includegraphics[width=\linewidth]{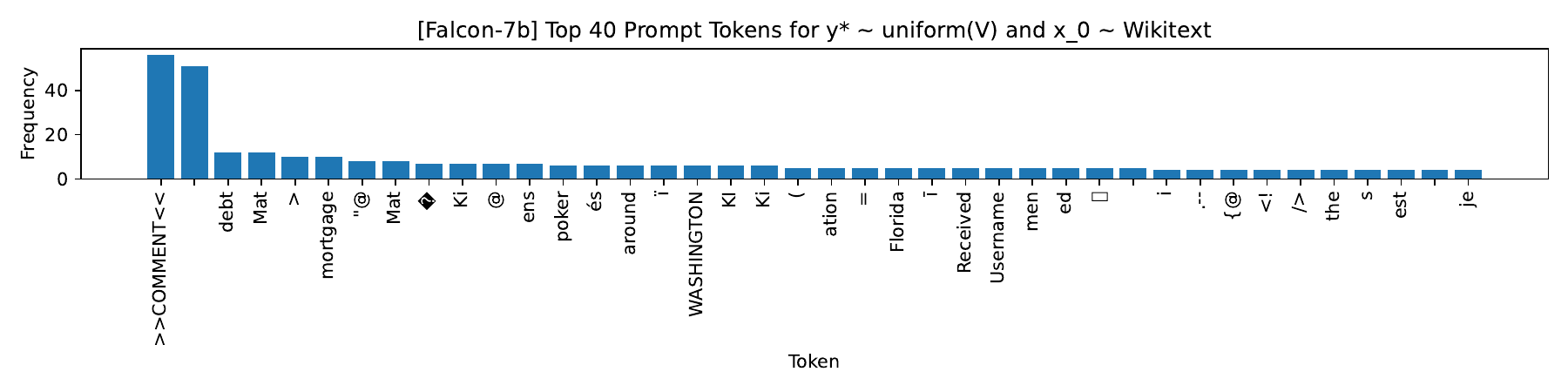}
    \caption{Supplementary figures on uniformly sampled target output controllability tests on Falcon-7b. \textbf{Top Left:} $k$-$\epsilon$ plot (46.42\% controllable at $k=10$). \textbf{Top Right:} Base loss versus required prompt length. \textbf{Bottom:} Histogram of top 40 most frequent tokens in optimized control prompts.}
    \label{fig:deep_main}
\end{figure}

\newpage
\section*{Glossary of Symbols}

\textbf{Self-Attention:}
\begin{itemize}
    \item $\Xi$ : The self-attention mechanism, a mapping from $\mathbb{R}^{N \times d_{in}}$ to $\mathbb{R}^{N \times d_{out}}$  
    \item $\mathbf{X}$ : The input matrix to self-attention, $\mathbf{X} \in \mathbb{R}^{N \times d_{in}}$
    \item $N$ : The number of input token representations
    \item $d_{in}$ : The dimensionality of each input token representation 
    \item $d_{out}$ : The dimensionality of each output token representation
    \item $\mathbf{W}_q, \mathbf{W}_{\rm key}, \mathbf{W}_v$ : The query, key, and value projection weight matrices
    \item $d_{\rm key}$ : The dimensionality of the key vectors
    \item $\mathbf{Q}, \mathbf{K}, \mathbf{V}$ : The query, key, and value matrices
    \item $\mathbf{D}$ : The diagonal matrix used for normalization
    \item $\mathbf{1}_{N \times 1}$ : An $N \times 1$ matrix of ones
\end{itemize}

\textbf{Input Partitioning:}  
\begin{itemize}
    \item $\mathbf{U}$ : The $k \times d_{in}$ submatrix of $\mathbf{X}$ corresponding to the control input
    \item $\mathbf{X}_0$ : The $m \times d_{in}$ submatrix of $\mathbf{X}$ corresponding to the imposed state
    \item $k$ : The number of control input tokens
    \item $m$ : The number of imposed state tokens
\end{itemize}

\textbf{Output Partitioning:}
\begin{itemize}
    \item $\mathbf{U}'$ : The $k \times d_{out}$ submatrix of the output corresponding to the control input 
    \item $\mathbf{Y}$ : The $m \times d_{out}$ submatrix of the output corresponding to the imposed state
    \item $\mathbf{Y}^*$ : The desired output, $\mathbf{Y}^* \in \mathbb{R}^{m \times d_{out}}$
    \item $\mathbf{Y}_u, \mathbf{Y}_x$ : The components of $\mathbf{Y}$ arising from $\mathbf{U}$ and $\mathbf{X}_0$ respectively 
    \item $\mathbf{Y}_{u,||}, \mathbf{Y}_{x,||}$ : The components of $\mathbf{Y}_u$ and $\mathbf{Y}_x$ parallel to $\mathbf{Y}^*$
    \item $\mathbf{Y}_{u,\perp}, \mathbf{Y}_{x,\perp}$ : The components of $\mathbf{Y}_u$ and $\mathbf{Y}_x$ orthogonal to $\mathbf{Y}^*$
    \item $\mathbf{Y}_{x,\perp}^{min}$ : The minimum value of $\mathbf{Y}_{x,\perp}$ over all control inputs that are uniformly bounded in norm by a fixed constant $M_u$ in the hypothesis of the theorem
\end{itemize}

\textbf{Reachability Conditions:}
\begin{itemize}
    \item $\beta_i(\mathbf{X}_0, k)$ : The upper bound on the norm of row $i$ of $\mathbf{Y}_{u,\perp}$
    \item $\gamma_i(\mathbf{X}_0, \boldsymbol{\theta})$ : A number that depends on $\mathbf{X}_0$ and $\boldsymbol{\theta} = (\mathbf W_q, \mathbf W_{\rm key}, \mathbf W_v)$
    \item $\alpha$ : An upper bound on the scaled key-query dot products
    \item $\sigma_q, \sigma_{\rm key}, \sigma_v$ : The maximum singular values of $\mathbf{W}_q$, $\mathbf{W}_{key}$, $\mathbf{W}_v$
    \item $M_u, M_x$ : The maximum norms of the control and imposed token embeddings
\end{itemize}

\end{document}